\newtheorem{theorem}{Theorem}
\newtheorem{proposition}{Proposition}
\newtheorem{lemma}{Lemma}
\newtheorem{example}{Example}
\newtheorem{corollary}{Corollary}
\definecolor{Gray}{gray}{0.9}
\definecolor{LightCyan}{rgb}{0.88,1,1}
\definecolor{LightGreen}{rgb}{1,0.88,1}
\newcolumntype{d}[1]{D{.}{.}{#1}}
\def\R{\mathbb{R}}
\def\1{\mathbbm{1}}
\def\x{\mathbf{x}}
\def\X{\mathbf{X}}
\def\u{\mathbf{u}}
\def\v{\mathbf{v}}
\def\y{\mathbf{y}}
\def\r{\mathbf{r}}
\def\z{\mathbf{z}}
\def\a{\mathbf{a}}
\def\e{\mathbf{e}}
\def\s{\mathbf{s}}
\def\p{\mathbf{p}}
\def\w{\mathbf{w}}
\newcommand{\Ex}{{\rm I\kern-.3em E}}
\def\cA{\mathcal{A}}
\def\cC{\mathcal{C}}
\DeclareMathOperator{\sign}{sign}
\newcommand{\argmin}{\mathop{\mathrm{arg\,min}}}
\newcommand{\argmax}{\mathop{\mathrm{arg\,max}}}
\newcommand{\eg}{{\em e.g.,~}}
\newcommand{\rev}[1]{\textcolor{black}{#1}}
\newcommand{\res}{\rm {res}}
\begin{document}

\twocolumn[

\aistatstitle{Convergent Working Set Algorithm for Lasso with  Non-Convex Sparse Regularizers}

\aistatsauthor{
	Alain Rakotomamonjy\\
	LITIS, Univ. de Rouen\\
	Criteo AI Lab, Paris\\
	\texttt{alain.rakoto@insa-rouen.fr} \\
	\And
	R\'emi Flamary  \\
	Univ C\^ote d'Azur,CNRS , OCA Lagrange  \\
	\texttt{remi.flamary@unice.fr} \\
	\AND  
	Gilles Gasso \\
	LITIS, INSA de Rouen \\
	\texttt{gilles.gasso@insa-rouen.fr} \\
	\And
	Joseph Salmon \\
	IMAG, Universit\'e de Montpellier, CNRS \\
	Montpellier, France \\
	\texttt{joseph.salmon@umontpellier.fr} \\
}
~\\
]
\vspace{1cm}

\begin{abstract}

Non-convex sparse regularizers are common tools for learning with high-dimensional
data. For accelerating convergence of a Lasso problem using those
regularizers, a working set strategy addresses the optimization problem through an iterative
algorithm by gradually incrementing the number of variables to optimize until the
identification of the solution support. We propose in this paper the first Lasso working set algorithm 
for non-convex sparse regularizers with convergence guarantees.
The algorithm, named \emph{FireWorks}, is based on a non-convex reformulation of
a recent  duality-based approach  and leverages on the geometry of the
residuals. We provide theoretical guarantees 
showing that convergence is preserved even when the inner solver is inexact,
under sufficient decay of the error across iterations.
Experimental results demonstrate strong computational gain when using our working set strategy compared to full problem solvers for both block-coordinate descent or a proximal gradient solver.
\end{abstract}

\section{Introduction}

Many real-world learning problems are of (very) high dimension. This is the case for
natural language processing problems with very large vocabulary or  recommendation problems involving million of items.
In such cases, one way of addressing the learning problem is to consider  sparsity-inducing penalties. Likewise, when the solution of a learning problem is known to be sparse, using these penalties yield to models that can leverage this prior knowledge.
The \emph{Lasso} \citep{tibshirani1996regression} and the \emph{Basis pursuit} \citep{chen2001atomic,chenbasispursuit} were the first approaches that have employed $\ell_1$-norm penalty for inducing sparsity.

The \emph{Lasso} model has enjoyed large practical successes in the machine learning and signal processing communities \citep{shevade2003simple,donoho2006compressed,lustig2008compressed,ye2012sparse}.
Nonetheless, it suffers from theoretical drawbacks (\eg biased estimates for large coefficients of the model) which can be overcome by considering non-convex sparsity-inducing penalties.
These penalties provide continuous approximations of the $\ell_0$-(pseudo)-norm which is the true measure of sparsity.
There exists a flurry of different penalties like the \emph{Smoothly Clipped Absolute Deviation} (SCAD) \citep{fan2001variable}, the \emph{Log Sum penalty} (LSP) \citep{candes2008enhancing}, the \emph{capped-$\ell_1$ penalty} \citep{zhang2010analysis}, the \emph{Minimax Concave Penalty} (MCP) \citep{zhang2010nearly}.
We refer the interested reader to \citep{Soubies_Blanc-FeraudAubert16} for a discussion on the pros and cons of such non-convex formulations.

In addition to theoretical statistical analyses, efforts have also been made for developing  computationally efficient algorithms for non-convex regularized optimization problems.
This includes coordinate descent algorithms \citep{breheny2011coordinate}, proximal gradient descent \citep{gong2013general} or Newton method \citep{wang2019fast,rakotomamonjy2015dc}.
However, all these methods share one kind of inefficiency in the sense that they spend a similar computational effort for each variable, even when these variables will end up  being irrelevant (zero weight) in the final learnt model.
In the non-convex setting, few methods have tried to lift this issue.
One approach mixes importance sampling and randomized coordinate descent \citep{flamary2015}, while another one seeks to safely screen features that are irrelevant \citep{rakotomamonjy19a}.
Working set (also known as active set) strategy aims at focusing computational effort on a subset of relevant variables, making them highly efficient for optimization problem with sparse solutions, provided that the algorithm is able to quickly identify the ``relevant" features.
In the literature, several works on working set algorithms address this selection issue mostly for convex optimization problems such as the Support Vector Machine problem \citep{vishwanathan2003simplesvm,glasmachers2006maximum} or the Lasso problem \citep{friedman2010regularization,Tibshirani_Bien_Friedman_Hastie_Simon_Tibshirani12,johnson2015blitz,pmlr-v80-massias18a}.
Working set strategies have been extended to non-convex sparse optimization problems \citep{boisbunon2014active,boisbunon2014large} but they are purely heuristic and
lack of convergence guarantees.

In this work, inspired by the Blitz algorithm proposed by
\citet{johnson2015blitz}(see also
\citep{Massias_Gramfort_Salmon17,pmlr-v80-massias18a} for its connection with
safe screening rules) we propose a theoretically supported method for selecting
a working set in non-convex regularized sparse optimization problems.
While Blitz can only be implemented for convex problems, leveraging on
primal-dual aspects of the $\ell_1$-regularized problem, we introduce a similar
algorithm that exploits the key role of the residual in a
sparse regression problem.
Our algorithm proposes a method for selecting the variables to integrate into a working set, and provides a theoretical guarantee  on objective value decrease.
Based on these results, we provide, as far as we know, the first convergence
guarantee of working set algorithm in a non-convex Lasso setting and we show
that this convergence property is preserved in a realistic inexact setting.

In summary, our contributions are the following:
	(1) we propose a novel working set algorithm for non-convex regularized regression that selects features to integrate in the model based on a so-called ``feasible" residual;
	(2) we prove that the algorithm enjoys properties such as convergence to a stationary point,
	even when the inner solver is inexact, under sufficient decay of the error along the iterations; as such, it is the first non-convex 
	working set algorithm with such a theoretical convergence proof.
	 (3) Our experimental results show that our \emph{FireWorks} algorithm  achieves
	substantial computational gain (that can reach two orders of magnitude)
	compared to the baseline approaches with proven convergence guarantees
	and on par with the heuristic working set algorithm of \cite{boisbunon2014active}.
		
\paragraph{Notation} We denote as $\X \in \R^{n\times d}$ the design matrix. We write vectors of size $d$ or size $n$ in bold \eg $\y  \in \R^n$ or $\w \in \R^d$.
We will consider several sets and they are noted in calligraphic mode. We have set of indices, mostly noted as $\cA$, with $\cA$ being a subset of indices extracted from $\{1,\dots,d\}$ and with cardinality noted $|\cA|$.
Given a set $\cA$, $\bar \cA$ denotes its complement in $\{1,\dots,d\}$.
 Set defined by (union of) function level-set will be denoted as $\cC$, with indices defining the function.
Vectors noted as $\w_\cA$ are of size $|\cA|$ and we note $\tilde \w_\cA \in \R^d$ for the vector of component $w_{j,\cA}$ for all $j \in \cA$ and $0$
elsewhere. Finally,  $\X_\cA$ represents matrix $\X$ restricted to columns indexed by $\cA$ and we will note $\res (\w) \triangleq \y - \X\w$ and  $\res (\w_\cA) \triangleq \y - \X_\cA \w_\cA = \y - \X \tilde\w_\cA$.

\section{Linear regression with non-convex regularizers}
\label{sec:framework}
We first introduce the non-convex Lasso problem we are interested in as well as
its first-order optimality conditions. We emphasize on the form of the
optimality conditions which will be key for designing our working set algorithm.
 
\subsection{The optimization problem}
\label{sub:the_optimization_problem}

We consider solving the problem of least-squares regression with a generic penalty of the form
\begin{equation}\label{eq:generalprob}
\min_{\w\in \R^d} f(\w) \triangleq \frac{1}{2} \| \y - \X\w\|_2^2 + \sum_{j=1}^d r_\lambda(|w_j|) \enspace,
\end{equation}
where $\y \in \R^n$ is a target vector, $\X=[\x_1,\dots,\x_d] \in \R^{n \times
d}$ is the design matrix with column-wise features $\x_j \in \R^n$, $\w$ is the
coefficient vector of the model and the map $r_\lambda: \R_+ \mapsto \R_+$ is
{monotonically non-decreasing,} concave and differentiable on $[0,+\infty)$ with a regularization
parameter $\lambda > 0$.
In addition, we assume that $r_{\lambda}(|\cdot|)$ is a lower semi-continuous function.
Note that most penalty functions such as SCAD, MCP or log sum (see their definitions in Table \ref{tab:ncvx_pen} in the supplementary material) satisfy such a property and that
for these penalties, $f(\cdot)$ is lower bounded.

We consider tools such as Fr\'echet subdifferentials and limiting-subdifferentials \citep{kruger2003frechet,rockafellar2009variational,mordukhovich2006frechet} well suited for non-smooth and non-convex optimization, so that a vector $\w^\star$ belongs to the set of minimizers (not necessarily global) of Problem \eqref{eq:generalprob} if  following Fermat's condition holds (see Definition 1.1 and Proposition 1.2 in \citep{kruger2003frechet} and Chapter 9 of \citep{schirotzek2007nonsmooth}):
\begin{equation}\label{eq:fermat}
\forall j,\,\,\x_j^\top (\y - \X \w^\star) \in  \partial r_{\lambda}(|w_j^\star|) \enspace,
\end{equation}
with $\partial r_\lambda(|\cdot|)$ being the Fr\'echet subdifferential of $r_{\lambda}(|\cdot|)$, assuming it exists at $\w^\star$. In particular, this is the case for the MCP, log sum and SCAD penalties presented in \Cref{tab:ncvx_pen}.
For the sake of clarity, we present next the optimality conditions for MCP and log sum.

\begin{example} For the MCP penalty (see \Cref{tab:ncvx_pen} for its definition and
	its subdifferential), it is easy to show that $\partial r_{\lambda}(|0|)= [-\lambda, \lambda]$. Hence,
	 Fermat's condition becomes with the residual 		\rev{$\res(\w^\star)$}
	\begin{align}\label{eq:optcondmcp}
	\begin{cases}
		- \x_j^\top \rev{\res(\w^\star)}  = 0, \quad & \text{if ~} |w_j^\star| > \lambda \theta \\
		- \x_j^\top \rev{\res(\w^\star)}  + \lambda \sign(w_j^\star) 
				= \tfrac{w_j^\star}{\theta}, \quad & \text{if ~}  0< |w_j^\star| \leq \lambda \theta  \\
			|\x_j^\top \rev{\res(\w^\star)}| \leq {\lambda},  \quad & \text{if ~} w_j^\star = 0 \enspace \\
	\end{cases}
	\end{align}
\end{example}
\begin{example} For the log sum penalty,
		one can explicitly compute
	$\partial r_{\lambda}(|0|) = [-\frac{\lambda}{\theta},\frac{\lambda}{\theta}]$ and leverage the smoothness of $r_{\lambda}(|w|)$
	when $|w|>0$ for computing $\partial r_{\lambda}(|w|)$. Then, the condition in Equation \eqref{eq:fermat} can be written as:
	\begin{align}\label{eq:optcond:lsp}
	\begin{cases}
		- \x_j^\top \rev{\res(\w^\star)}  + \lambda \frac{\sign(w_j^\star)}{\theta + |w_j^\star|} = 0, \quad & \text{if ~} w_j^\star \neq 0 \enspace,\\
		|\x_j^\top \rev{\res(\w^\star)}| \leq \frac{\lambda}{\theta},  \quad & \text{if ~} w_j^\star = 0 \enspace.
	\end{cases}
	\end{align}
\end{example}

As we can see, first-order optimality conditions lead to simple equations and inclusions. More interestingly, one can note that regardless of the regularizer, the structure of
optimality condition for a weight $w_j^\star = 0$ depends on
the correlation of the feature $\x_j$ with the optimal residual $\rev{\res(\w^\star)}=\y - \X\w^\star$. Hence, these conditions can be used for defining a region in which the optimal residual has to live in.

\section{Working set algorithm and analysis}
\label{sec:algorithm}

Before presenting the \emph{FireWorks} algorithm, we first introduce all concepts needed for defining and analyzing  our working set algorithm.

\subsection{Restricted problem and optimality}
\label{sub:restricted_problem}

Given a set $\cA$ of $m$ indices belonging to $\{1,\dots,d\}$, the problem defined in Equation {\eqref{eq:restrictedprob}}~is the restriction of Problem \eqref{eq:generalprob} to the columns of $\X$ indexed by $\cA$:
\begin{align}\label{eq:restrictedprob}
	 	 \min_{\w_{\cA}\in \R^{|\cA|}}
	 \frac{1}{2} \| \y - \X_\cA \w_{\cA}\|_2^2
	  	 + \sum_{j =1}^{|\cA|} r_\lambda(|w_{j, \cA}|) \enspace.
\end{align}
Naturally, a vector $\w_{\cA}^\star$ minimizing this problem has to satisfy its own optimality condition.
However, the next proposition derives a necessary condition for  optimality, that will be useful for characterizing whether $\tilde \w^\star_\cA$ is optimal for the full problem.

\begin{proposition} \label{prop:optimality}If $\w_\cA^\star$ satisfies Fermat's condition of Problem \eqref{eq:restrictedprob}, then for all $j \in \cA$, we have
\begin{equation}\label{eq:proplowerbound}
	|\x_j^\top (\y - \X_{\cA} \w_\cA^\star)| \leq  r_\lambda^\prime(0)
\end{equation}
where $r_\lambda^\prime$ is the derivative of $r_\lambda$.
\end{proposition}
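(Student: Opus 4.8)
The plan is to exploit the separability of the penalty in Problem~\eqref{eq:restrictedprob} and reduce the claim to a one-dimensional statement about the Fr\'echet subdifferential of the scalar map $t \mapsto r_\lambda(|t|)$. Since the data-fitting term of Problem~\eqref{eq:restrictedprob} is $C^1$ with gradient $-\X_\cA^\top \res(\w_\cA)$ and the penalty $\sum_j r_\lambda(|w_{j,\cA}|)$ is separable, Fermat's condition for the restricted problem reads, coordinate by coordinate,
\[
	\x_j^\top \res(\w_\cA^\star) \in \partial r_\lambda(|w_{j,\cA}^\star|)\,, \qquad \forall j \in \cA\,,
\]
so it suffices to show that any $g \in \partial r_\lambda(|t^\star|)$ satisfies $|g| \le r_\lambda^\prime(0)$ for every $t^\star \in \R$, where $\partial$ denotes here the Fr\'echet subdifferential of $t \mapsto r_\lambda(|t|)$.

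First I would handle the case $t^\star \neq 0$ (i.e.\ $w_{j,\cA}^\star \neq 0$). There $t\mapsto r_\lambda(|t|)$ is differentiable at $t^\star$, because $r_\lambda$ is differentiable on $[0,+\infty)$ and $|t^\star|>0$; hence $\partial r_\lambda(|t^\star|) = \{\sign(t^\star)\,r_\lambda^\prime(|t^\star|)\}$ and $|g| = r_\lambda^\prime(|t^\star|)$. Since $r_\lambda$ is non-decreasing, $r_\lambda^\prime \ge 0$; since $r_\lambda$ is concave and differentiable, $r_\lambda^\prime$ is non-increasing; therefore $0 \le r_\lambda^\prime(|t^\star|) \le r_\lambda^\prime(0)$, which is the desired bound.

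The main work is the case $t^\star = 0$ (i.e.\ $w_{j,\cA}^\star = 0$), where I would compute $\partial r_\lambda(|\cdot|)(0)$ explicitly and show it equals $[-r_\lambda^\prime(0), r_\lambda^\prime(0)]$. By definition, $g$ lies in this set iff $\liminf_{t\to 0}\bigl(r_\lambda(|t|)-r_\lambda(0)-g t\bigr)/|t| \ge 0$. A first-order Taylor (or mean-value) expansion of $r_\lambda$ at $0$ gives $\bigl(r_\lambda(|t|)-r_\lambda(0)\bigr)/|t| \to r_\lambda^\prime(0)$ as $t\to 0$, while $g t/|t| = g\,\sign(t)$ takes the two values $\pm g$; hence the $\liminf$ equals $r_\lambda^\prime(0) - |g|$, which is nonnegative precisely when $|g| \le r_\lambda^\prime(0)$. (This is consistent with $\partial r_\lambda(|0|) = [-\lambda,\lambda]$ for MCP and $\partial r_\lambda(|0|) = [-\lambda/\theta,\lambda/\theta]$ for log sum, for which $r_\lambda^\prime(0)$ equals $\lambda$ and $\lambda/\theta$ respectively.) Combining the two cases yields $|\x_j^\top \res(\w_\cA^\star)| \le r_\lambda^\prime(0)$ for all $j \in \cA$, i.e.\ \eqref{eq:proplowerbound}. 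The only delicate point is making the subdifferential computation at $0$ fully rigorous under the stated hypotheses (monotonicity, concavity, differentiability and lower semicontinuity of $r_\lambda(|\cdot|)$), in particular the evaluation of that $\liminf$ by splitting off the convergent term; the rest is elementary.
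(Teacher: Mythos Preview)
Your proposal is correct and follows essentially the same route as the paper: split on whether $w_{j,\cA}^\star$ is zero, use differentiability of $r_\lambda(|\cdot|)$ away from zero together with the fact that concavity and monotonicity make $r_\lambda'$ nonnegative and non-increasing, and invoke the Fr\'echet subdifferential at zero for the remaining coordinates. The only difference is that the paper treats the inclusion $\partial r_\lambda(|0|)\subseteq[-r_\lambda'(0),r_\lambda'(0)]$ as immediate from the earlier examples, whereas you derive it explicitly via the $\liminf$ definition; this extra rigor is fine and does not change the argument.
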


 Now given  Proposition \ref{prop:optimality}, we are going to define some sets useful for characterizing
 candidate stationary points of either Equations \eqref{eq:generalprob} or \eqref{eq:restrictedprob}.
 Let us we define the function $h_j : \R^n \rightarrow \R$, for $j \in \{1,\dots,d\}$ as $h_j(\a) = |\x_j^\top \a| - r_\lambda^\prime(0)$ and the convex sets
$\cC_{j}$   as the slab   
$$\cC_{j} \triangleq \{\a \in \R^n : h_j(\a) \leq 0\}$$
and  $\cC_{j}^{=}$ as its boundary 
  $$\cC_{j}^{=} \triangleq \{\a \in \R^n : h_j(\a) = 0\}.$$
 By introducing\footnote{For $\ell_1$-type convex regularizers $\cC$ is the dual feasible set.} $\cC = \bigcap_{j=1}^d \cC_j$ and $\cC_{\cA} = \bigcap_{j \in \cA} \cC_j$ the necessary optimality condition defined in Proposition \ref{prop:optimality} can be written as $\y - \X_\cA\w_{\cA}^\star \in \cC_\cA$.
 Hence, assuming that $\w_{\cA}^\star$ is a minimizer of
its restricted Problem \eqref{eq:restrictedprob}, its extension $\tilde \w_{\cA}^\star \in \R^d$ satisfies Fermat's condition of the full problem if the following holds
\begin{equation}\label{eq:restrictedtofull}
\y - \X \tilde\w_{\cA}^\star \in \cC_{\bar \cA}\enspace,
\end{equation}
where $\bar \cA$ is the complement of $\cA$ in $\{1,\dots,d\}$. Indeed,
since $\w_{\cA}^\star$ is optimal for the restricted problem, Fermat's condition is already satisfied for all $j \in \cA$. Then, the above condition ensures that $\forall j \in \bar \cA$, we have $|\x_j^\top(\y - \X \tilde \w_{\cA}^\star)| \leq r_\lambda^\prime(0)$ since, as by definition,   $\tilde w_j^\star=0$, $\forall j \in \bar \cA$.

Equation \ref{eq:restrictedtofull} provides  an easy way to check whether a solution of a restricted problem is a potential candidate for being also a solution to the full problem. For
this purpose, 
we define the distance of a vector $\r \in \mathbb{R}^n$ to the convex set $\cC_{j}$ and  $\cC_{j}^{=}$ as
  \begin{equation}
 	\text{dist}(\r, \cC_{j}) \triangleq\min_{\z \in \R^n} \|\z - \r\|_2 \enspace, 						   \text{~s.t.~}   h_j(\z) \leq 0\enspace; \nonumber
\end{equation}
and
\begin{equation}
 						\text{dist}_{S}(\s, \cC_{j}^=) \triangleq\min_{\z \in \R^n} \|\z - \s\|_2 \enspace,
 						   \text{~s.t.~}   h_j(\z) = 0 \enspace.
 						   \nonumber
 \end{equation}
These distances  can also be used for defining the most violated
optimality condition, a key component of the methods proposed by \cite{boisbunon2014active,flamary2015}.
Indeed, given a set $\cA$, the solution $\w_{\cA}^\star$ of Equation \eqref{eq:restrictedprob} and the associated residual
\rev{$\res(\w_{\cA}^\star)$}, the index
$j^\star = \argmax_{j \in \bar \cA} \text{dist}\big(\res(\w_{\cA}^\star), \cC_{j}\big)$ is the index of the most violated optimality condition among non-active variables for the residual \rev{$\res(\w_{\cA}^\star$)}.

\subsection{Feasible Residual Working Set Algorithm for non-convex Lasso}
\label{sub:blitz_as_a_working_set_algorithm_for_non_convex_lasso}

A working set algorithm for solving Problem \eqref{eq:generalprob} consists
in sequentially solving a series of restricted problem as defined in Equation \eqref{eq:restrictedprob} with a sequence of
working sets  $\cA_0, \cA_1, \dots,\cA_k$.
The main differences among working set algorithms lie on  the way the set
is being updated. For instance, the approach of \cite{boisbunon2014active},
denoted in the experiment as MaxVC, selects the variable with the most violated
optimality conditions (as defined above) in the non-active set to be included in
the new working set, leading to the algorithm presented in the supplementary
material. \citet{flamary2015} followed a similar approach but considered a
randomized selection in which the probability of selection is related to
$\text{dist}\big(\res(\w_{\cA_k}^\star), \cC_{j}\big)$.

Our algorithm is inspired by Blitz \citep{johnson2015blitz} which is a working set algorithm dedicated to convex constrained optimization problem. But as the problem we address is a non-convex one, we manipulate different mathematical objects that need to be redefined.
The procedure is presented in Algorithm \ref{alg:activeset_ncvx}.
It starts by selecting a small subset of indices 
for instance the ten indices with largest $|\x_j^\top \y|$) as initial working set and by choosing a vector $\s_1$ such that $\s_1 \in \cC = \bigcap_{j=1}^d \cC_{j}$, for instance setting $\s_1= \mathbf{0}$. From this vector $\s_1$, we
will generate a sequence $\{\s_{k}\}$ that plays a key role in the  selection of the features to be integrated in the next restricted model.
 Then, at iteration $k$, it 
solves the restricted problem with the
 set $\cA_k$ and then by computing the residual
$\r_k = \rev{\res(\w_{\cA_k}^\star)}$ with $\w_{\cA_k}^\star$ the true solution to the restricted problem.
As noted in Equation \eqref{eq:restrictedtofull}, if $\r_k \in \cC_{\bar \cA_k}$ then the vector $\tilde \w_{\cA_k}^\star$ is
a stationary point of the full problem. If $\r_k \not \in \cC_{\bar \cA_k}$, we need to update the
working set $\cA_k$. We first prune  $\cA_k$ by removing indices 
associated to zero weights in  $\w_{\cA_k}^\star$. Then, in order to add
features to the working set, we  define
$\s_{k+1}$ as the vector on the segment $[\s_{k},\r_k]$,
nearest to $\r_k$ that belongs to $\cC$.
Then, the working set is updated by integrating predictors $j$ whose
associated slab $\cC_{j}$ frontier are nearest to $\s_{k+1}$. Hence,
the index $j$ is included in the new working set if $\text{dist}_S(\s_{k+1}, \cC_{j}^=) \leq \tau_k$, where $\tau_k$ is a strictly positive term that defines the number
of features to be added to the current working set. In practice, we have
chosen $\tau_k$ so that a fixed number $n_{\rm{added}}$ of features is added to the working set $\cA_k$ at each iteration $k$.

We provide the following intuition on why this algorithm works in practice.
At first, note that by construction $\s_{k+1}$ is a convex combination
of  two vectors one of which is the residual hence justifies its interpretation
as a pseudo-residual. However, the main difference between the $\s_k$'s and $\r_k$'s is
that the former belongs to $\cC$ and thus to
any $\cC_{\bar \cA}$ while $\r_k$ belongs to $\cC$ only for a potential  $\tilde \w_{\cA_{k}}^\star$ optimal for the full problem.
Then, when $\w_{\cA_k}^\star$ is a stationary point
for the restricted problem but not for the full problem, we have
$\r_k \in \cC_{\cA}$ but $\r_k \not \in \cC$.  Hence, $\s_{k+1}$ represents a  residual candidate for
optimality and slab's frontiers near this pseudo-residual $\s_{k+1}$ can be interpreted as the slabs associated to features that need to be integrated in the working set (allowing associated weights $w_j$'s to be potentially non-zero at the next iteration).
 This mechanism for selection is shown in Figure \ref{fig:mechanism}.
\begin{algorithm}[t]
	\caption{FireWorks: Feasible Residual Working Set Algorithm}
	\label{alg:activeset_ncvx}
	\begin{algorithmic}[1]
		\REQUIRE{ $\{\X, \y\}$, $\cA_1$ active set, $\s_1 \in \cC$, a sequence of $\tau_k$ or a mechanism for defining $\tau_k$, initial vector $ \tilde \w_{\cA_{0}}$ }
		\ENSURE{$\tilde \w_{\cA_{k}}$}
		\FOR{$k=1,2,\dots $}
		\STATE $\w_{\cA_{k}} =  \argmin_{\w}  \frac{1}{2}\|\y -  \X_{\cA_k} \w \|_2^2 +  \sum_{j \in \cA_k} r_\lambda(|w_j|) $ \hfill \emph{//warm-start solver with $\w_{\cA_{k-1}}$}
		\STATE $\r_k = \y - \X_{\cA_k} \w_{\cA_{k}}$ \hfill\emph{//get residual}
		\STATE $\alpha_k = \max \{\alpha \in [0,1] :  \alpha \r_k
		+ (1- \alpha) \s_{k} \in \cC\}$
		\STATE $\s_{k+1} = \alpha_k \r_k
		+ (1- \alpha_k) \s_{k}$ \hfill\emph{//define the most "feasible" residual}
		\STATE $\cA_k = \cA_k / \{j \in \cA_j : w_{j,\cA_{k}}=0 \}$ \hfill\emph{//prune the set from inactive features}
		\STATE compute $\tau_k$  \hfill\emph{//\emph{e.g.,} sort $ \text{dist}_S(\s_{k+1}, \cC_{j}^=$) so as to keep constant number of features to add}
		\STATE $\cA_{k+1} = \{j : \text{dist}_S(\s_{k+1}, \cC_{j}^=)\} \leq \tau_k\} \cup \cA_k$ \hfill\emph{//update working set}
		\ENDFOR
		\STATE Build $\tilde \w_{\cA_{k}}$
	\end{algorithmic}
\end{algorithm}
\begin{figure*}\centering
	\includegraphics[width=.90\linewidth]{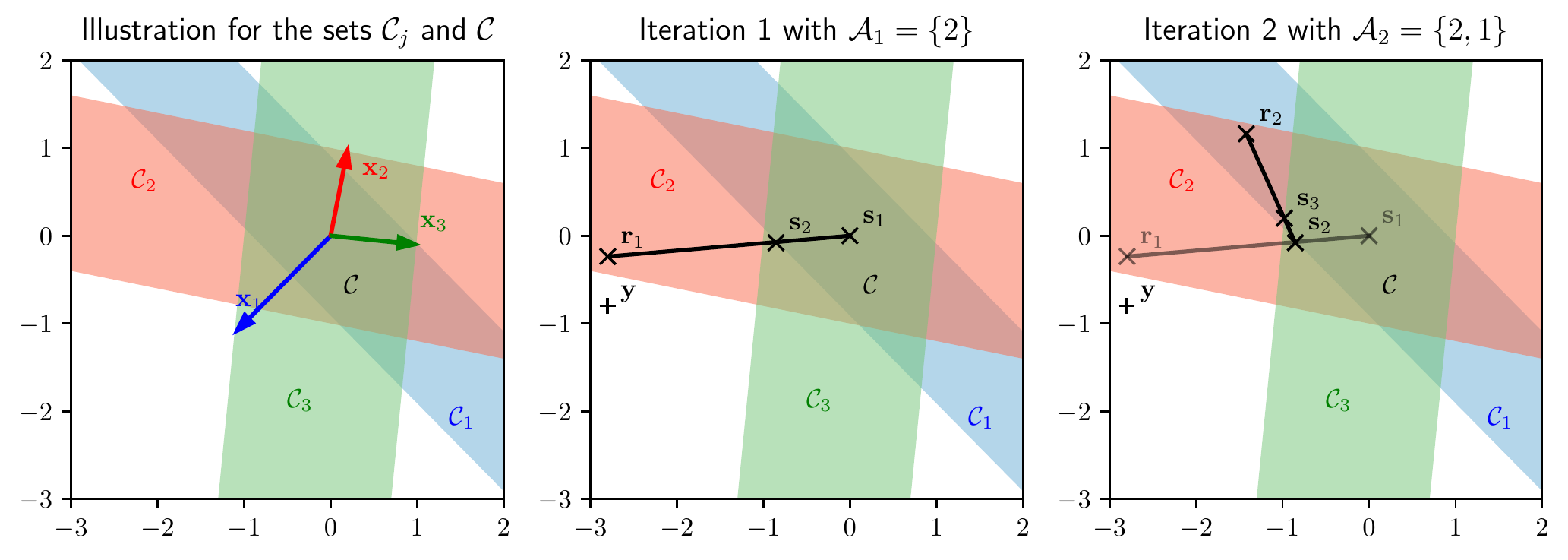}
 
\caption{Illustrating the feature selection.
(left)  Given three variables, we plot their associate slabs  $\{\cC_{j}\}_{j=1}^3$.  $\cC$ is the intersection of the
$3$ slabs.	We assume
that the initial working set is $\{2\}$.
	 (middle) After the first iteration,  the residual $\r_1$ satisfies the condition $h_2(\a) \leq 0$ and thus lies in region $\cC_2$. Then, the segment $[\s_1,\r_1]$ gives us the most feasible point $s_2 \in \cC$. If $\tau_1$ is chosen so as to select only one feature, it is then $j=1$. The new working set
	 is $\{2,1\}$. (right)
	  After optimizing over this working set, the residual $\r_2$ lies in the $\cC_1 \cap \cC_2$ region.
\label{fig:mechanism}
}
\end{figure*}

\paragraph{Relation with maximum violated optimality condition algorithm \cite{boisbunon2014active}.} The mechanism we have proposed for updating the working set is based on the current residual $\r_k$ and a feasible residual $\s_k$. By changing how $\s_{k+1}$ is defined, we can retrieve the algorithm
proposed by \citet{boisbunon2014active}.
Indeed, if we set at~Line~5 of Algorithm \ref{alg:activeset_ncvx}, $\forall k, \s_k=0$ and $\s_{k+1} = \alpha_k \r_k$, with $\alpha_k \in[0,1]$ then $\s_{k+1}$ is a rescaling of the current residual and the scale is chosen so that $\s_{k+1} \in \cC$.
Using a simple inequality argument, it is straightforward to show that
$\alpha_k = \min(\min_{j \in \bar \cA_k} \frac{\lambda}{|\x_j^\top r_k|},1)$
and the minimum in $j$ occurs for the largest value of $|\x_j^\top \r_k|$. From the theoretical side, we want to emphasize that \citet{boisbunon2014active} do not provide convergence proof of this
algorithm. Nonetheless, we conjecture that the polynomial convergence of this algorithm is guaranteed for exact inner solver and when working set is never pruned (removing from the set $\cA_k$ variables which weights are $0$ is not allowed).

\subsection{Some properties of the algorithm}
\label{sub:some_properties_of_the_algorithm}

In this subsection, we analyze some properties of the proposed algorithm. At first, we introduce an alternative optimality condition (whose proof is in the supplemental), based on $\alpha_k$ for the
full problem. Based on this property and some intermediate results, we will show that the iterates
$\tilde \w_{\cA_{k}}^\star$ converge towards a stationary point of the full problem. 

\begin{proposition} \label{prop:optalpha} Given a working set $\cA_{k}$ and $\w_{\cA_{k}}^\star$ solving the related restricted problem, $\tilde \w_{\cA_{k}}^\star$ is also optimal for the full problem if and only if $\alpha=1$ in Algorithm \ref{alg:activeset_ncvx}, step 4 (which also means $\s_{k+1} = \r_k$).
\end{proposition}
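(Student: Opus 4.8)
The plan is to reduce the statement to the set-membership characterization $\r_k \in \cC$ — which is essentially already established in the discussion around \eqref{eq:restrictedtofull} — and then simply to read off the value of $\alpha_k$ from the maximization defining it in step~4.

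First I would record two preliminary facts. (i) The set $\cC = \bigcap_{j=1}^d \cC_j$ is closed and convex, being an intersection of slabs; hence, for the residual $\r_k = \y - \X_{\cA_k}\w_{\cA_k}^\star$ and the feasible point $\s_k$, the set $\{\alpha \in [0,1] : \alpha \r_k + (1-\alpha)\s_k \in \cC\}$ is a closed subinterval of $[0,1]$ containing $0$ (since $\s_k \in \cC$), so the maximum defining $\alpha_k$ is attained and $\s_{k+1} = \alpha_k \r_k + (1-\alpha_k)\s_k \in \cC$. In particular, by induction from $\s_1 \in \cC$, every $\s_k$ lies in $\cC$. (ii) Since $\w_{\cA_k}^\star$ minimizes the restricted Problem~\eqref{eq:restrictedprob}, Proposition~\ref{prop:optimality} gives $|\x_j^\top \r_k| \le r_\lambda^\prime(0)$ for every $j \in \cA_k$, i.e.\ $\r_k \in \cC_{\cA_k}$.

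Next I would characterize optimality of $\tilde\w_{\cA_k}^\star$ for the full problem. By Fermat's condition \eqref{eq:fermat}, since $\tilde w_j^\star = 0$ for $j \in \bar\cA_k$ and $\partial r_\lambda(|0|) = [-r_\lambda^\prime(0), r_\lambda^\prime(0)]$, the extension $\tilde\w_{\cA_k}^\star$ is stationary for \eqref{eq:generalprob} if and only if $|\x_j^\top \r_k| \le r_\lambda^\prime(0)$ for all $j \in \bar\cA_k$ — the conditions for $j \in \cA_k$ being automatic because $\w_{\cA_k}^\star$ is already optimal for the restricted problem — that is, if and only if $\r_k \in \cC_{\bar\cA_k}$. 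Combining with fact (ii), $\r_k \in \cC_{\cA_k}$ always holds, so $\tilde\w_{\cA_k}^\star$ is optimal for the full problem if and only if $\r_k \in \cC$. It then remains to show $\r_k \in \cC \iff \alpha_k = 1$: if $\alpha_k = 1$, then $\s_{k+1} = \r_k$ and, by the attainment in (i), $\r_k = \alpha_k\r_k + (1-\alpha_k)\s_k \in \cC$; conversely, if $\r_k \in \cC$ then $\alpha = 1$ is feasible in the maximization of step~4, so $\alpha_k = 1$ (the constraint $\alpha \le 1$ being active) and $\s_{k+1} = \r_k$. This yields both implications and the parenthetical claim.

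The subtle point — and the one I would be most careful about — is the necessity direction of the equivalence ``full stationarity $\Leftrightarrow \r_k \in \cC_{\bar\cA_k}$'': it requires that the Fr\'echet subdifferential $\partial r_\lambda(|0|)$ be exactly the symmetric interval $[-r_\lambda^\prime(0), r_\lambda^\prime(0)]$ (which holds because $r_\lambda$ is differentiable and monotonically non-decreasing at $0$, so $r_\lambda(|\cdot|)$ is locally $r_\lambda^\prime(0)|\cdot|$ near the origin, as confirmed for MCP, log sum and SCAD), and that \eqref{eq:restrictedtofull} therefore provides a genuine two-sided characterization rather than merely the sufficient condition as stated. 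Everything else is bookkeeping on the convex geometry of $\cC$ and the definition of $\alpha_k$.
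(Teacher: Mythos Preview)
Your proof is correct and follows essentially the same approach as the paper's: reduce optimality of $\tilde\w_{\cA_k}^\star$ for the full problem to the membership $\r_k \in \cC$, and then read off $\alpha_k = 1$ directly from the maximization in step~4. Your version is more careful about the preliminaries (attainment of the maximum, the induction $\s_k \in \cC$, and the exact form of $\partial r_\lambda(|0|)$), but the logical skeleton is identical to the paper's argument.
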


Now, we are going to characterize the decrease in objective value obtained between two updates of working sets, assuming that in the update, there is a least one feature that does not satisfy its optimality condition.

\begin{proposition} \label{prop:w}Assume that $\|\X\|_2 > 0$ and $\w_{\cA_{k}}^\star$ and $\w_{\cA_{k+1}}^\star$ are respectively the solutions of the restricted problem with the working set $\cA_k$ and $\cA_{k+1}$, with $\cA_{k+1} =  \{j_1,\cdots, j_{n_{added}}\} \cup \cA_k$, such that there exists at least one $j_i$ with ${\rm dist}(\r_k, \cC_{j_i}) > 0$.
	As we note $\r_k \triangleq   \rev{\res(\w_{\cA_k}^\star)}$,  	the following inequality holds for all $j_i$ such that ${\rm dist}(\r_k, \cC_{j_i}) > 0$
	\begin{align*}
	\| \tilde \w_{\cA_{k+1}}^\star - \tilde\w_{\cA_{k}}^\star\|_2 \geq \frac{1}{\|\X\|_2} {\rm dist}(\r_k, \cC_{j_i}) \enspace.
	\end{align*}
	
\end{proposition}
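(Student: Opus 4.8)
## Proof plan for Proposition \ref{prop:w}

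\textbf{Setup and the key mechanism.} The plan is to exploit the fact that both $\w_{\cA_k}^\star$ and $\w_{\cA_{k+1}}^\star$ are stationary points of their respective restricted problems, together with the geometry of the slab $\cC_{j_i}$. Fix an index $j_i$ with $\mathrm{dist}(\r_k,\cC_{j_i}) > 0$, i.e.\ $|\x_{j_i}^\top \r_k| > r_\lambda^\prime(0)$. Since $j_i \in \cA_{k+1}$, Proposition \ref{prop:optimality} applied to the restricted problem on $\cA_{k+1}$ gives $|\x_{j_i}^\top \res(\w_{\cA_{k+1}}^\star)| \leq r_\lambda^\prime(0)$, which says precisely that $\r_{k+1} \triangleq \res(\w_{\cA_{k+1}}^\star) \in \cC_{j_i}$. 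So we have one point ($\r_k$) strictly outside the slab $\cC_{j_i}$ and one point ($\r_{k+1}$) inside it.

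\textbf{From residual gap to weight gap.} First I would relate the two residuals: $\r_k - \r_{k+1} = \X_{\cA_{k+1}}\w_{\cA_{k+1}}^\star - \X_{\cA_k}\w_{\cA_k}^\star = \X(\tilde\w_{\cA_{k+1}}^\star - \tilde\w_{\cA_k}^\star)$, using that $\cA_k \subseteq \cA_{k+1}$ and the zero-padding convention for $\tilde\w$. Hence
\begin{equation*}
\|\r_k - \r_{k+1}\|_2 = \|\X(\tilde\w_{\cA_{k+1}}^\star - \tilde\w_{\cA_k}^\star)\|_2 \leq \|\X\|_2 \,\|\tilde\w_{\cA_{k+1}}^\star - \tilde\w_{\cA_k}^\star\|_2 \enspace.
\end{equation*}
It therefore suffices to show $\|\r_k - \r_{k+1}\|_2 \geq \mathrm{dist}(\r_k,\cC_{j_i})$. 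This is where the geometry enters: $\mathrm{dist}(\r_k,\cC_{j_i})$ is by definition the distance from $\r_k$ to the nearest point of the (closed convex) slab $\cC_{j_i}$; since $\r_{k+1}$ is \emph{some} point of $\cC_{j_i}$, we immediately get $\|\r_k - \r_{k+1}\|_2 \geq \min_{\z\in\cC_{j_i}}\|\z - \r_k\|_2 = \mathrm{dist}(\r_k,\cC_{j_i})$. Chaining the two inequalities and dividing by $\|\X\|_2 > 0$ yields the claim.

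\textbf{Where the care is needed.} The argument above is short, so the real content is checking the two facts I leaned on: (i) that $\r_{k+1}\in\cC_{j_i}$, which needs $j_i$ to actually belong to $\cA_{k+1}$ and needs Proposition \ref{prop:optimality} to apply — here I must be mindful of the pruning step (Line 6), but since pruning only removes indices with zero weight and the $j_i$'s are \emph{added} at Line 8 after pruning, $j_i \in \cA_{k+1}$ holds by construction; and (ii) the identity $\r_k - \r_{k+1} = \X(\tilde\w_{\cA_{k+1}}^\star - \tilde\w_{\cA_k}^\star)$, which relies on $\cA_k\subseteq\cA_{k+1}$ so that both residuals can be written against the common larger matrix $\X$ via the zero-extension. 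The main (mild) obstacle is thus bookkeeping around the set updates and the $\tilde{\cdot}$ notation rather than any genuine analytic difficulty; once those are pinned down, the distance-to-a-convex-set bound does all the work.
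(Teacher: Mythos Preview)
Your proposal is correct and follows essentially the same route as the paper: bound $\|\r_k-\r_{k+1}\|_2$ above by $\|\X\|_2\|\tilde\w_{\cA_{k+1}}^\star-\tilde\w_{\cA_k}^\star\|_2$, then use $\r_{k+1}\in\cC_{j_i}$ (via optimality on $\cA_{k+1}$) so that $\|\r_k-\r_{k+1}\|_2\ge\mathrm{dist}(\r_k,\cC_{j_i})$. One minor remark: the identity $\r_k-\r_{k+1}=\X(\tilde\w_{\cA_{k+1}}^\star-\tilde\w_{\cA_k}^\star)$ does not actually require $\cA_k\subseteq\cA_{k+1}$, since the zero-padding $\tilde\w$ already lifts both residuals to $\y-\X\tilde\w$ regardless of the index sets.
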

\begin{proof}
	We have the following inequalities
	\begin{equation}
	\|\r_{k+1} - \r_k\|_2
	= \|\X (\tilde \w_{\cA_{k+1}}^\star - \tilde \w_{\cA_{k}}^\star) \|_2 
	\leq \|\X\|_2 \|\tilde\w_{\cA_{k+1}}^\star - \tilde\w_{\cA_{k}}^\star\|_2 \label{eq:dist1} \enspace.
	\end{equation}
	
	Now recall that $\r_k \not\in  \cC_{\cA_{k+1}}$ since $\exists j_i :$ ${\rm dist}(\r_k, \cC_{j_i}) > 0$, while $\r_{k+1} \in  \cC_{\cA_{k+1}}$ as $\w_{\cA_{k+1}}^\star$
	has been optimized over $\cA_{k+1}$. As such, for all
	$j_i :$ ${\rm dist}(\r_k, \cC_{j_i}) > 0$, we also have
	$h_j(\r_{k+1}) \leq 0$.
	Now by definition of
	$\text{dist}(\r_k, \cC_{j_i})$ either $\r_{k+1}$ is the minimizer of the distance optimization problem, hence $\text{dist}(\r_k, \cC_{j_i}) = \|\r_{k+1} - \r_k\|_2$ or  $\text{dist}(\r_k, \cC_{j}) \leq \|\r_{k+1} - \r_k\|_2$.
	Plugging this latter inequality in \ref{eq:dist1} concludes the
	proof.
\end{proof}
Given the right hand side of the equation in Proposition \ref{prop:w},
we now show
 that the distance of the residual $\r_k$ at step $k$ to a set  $\cC_{j}$, defined by a feature $j$ that is not yet in the active set,  is lower bounded by a term depending on the parameter $\tau_{k-1}$ which governs the number of features that has been added to the active set at step $k-1$.
\begin{lemma}\label{prop:dist} At step $k \geq 2$, consider a set $\cC_{j}$ such that
	$h_j(\r_k) >0$ and $h_j(\s_k) <0$, then
	\begin{align}
	    \text{dist}(\r_k, \cC_{j}) \geq \frac{1 - \alpha_k}{\alpha_k} \tau_{k-1} \enspace.
	\end{align}
\end{lemma}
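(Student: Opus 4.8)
The plan is to pass to the one-dimensional picture seen through the linear functional $\a\mapsto\x_j^\top\a$, in which $\cC_j$ is a symmetric slab and $\cC_j^{=}$ its two bounding hyperplanes, and then to use that $\s_{k+1}$ lies on the segment $[\s_k,\r_k]$ and in $\cC$. First I would set $\rho:=r_\lambda'(0)$, $a:=\x_j^\top\s_k$, $b:=\x_j^\top\r_k$; replacing $\x_j$ by $-\x_j$ if needed (which leaves $h_j$, $\cC_j$, $\cC_j^{=}$ and both distances unchanged) I may assume $b\ge 0$, so the hypotheses $h_j(\r_k)>0$ and $h_j(\s_k)<0$ become $b>\rho$ and $|a|<\rho$. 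A direct projection computation onto the slab $\cC_j=\{\a:|\x_j^\top\a|\le\rho\}$ then gives $\text{dist}(\r_k,\cC_j)=(b-\rho)/\|\x_j\|_2$, and, since $\s_k$ lies strictly between the hyperplanes $\x_j^\top\a=\pm\rho$, $\text{dist}_S(\s_k,\cC_j^{=})=(\rho-|a|)/\|\x_j\|_2$.

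Next I would bring in $\s_{k+1}=\alpha_k\r_k+(1-\alpha_k)\s_k$. Since $h_j(\r_k)>0$ we have $\r_k\notin\cC$, hence $\alpha_k<1$, and $\s_{k+1}\in\cC\subseteq\cC_j$. For the slab in question --- the one whose boundary $\s_{k+1}$ reaches, that is, the constraint tight in the maximisation defining $\alpha_k$, so that $h_j(\s_{k+1})=0$ --- we get $\x_j^\top\s_{k+1}=\alpha_k b+(1-\alpha_k)a=\rho$ (it equals $+\rho$ and not $-\rho$ because $\alpha_k b+(1-\alpha_k)a\ge a>-\rho$), whence $\alpha_k(b-a)=\rho-a$ and $\frac{1-\alpha_k}{\alpha_k}=\frac{b-\rho}{\rho-a}$. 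Combining with the two distance formulas and $\rho-a\ge\rho-|a|$,
\[
\text{dist}(\r_k,\cC_j)=\frac{b-\rho}{\|\x_j\|_2}=\frac{1-\alpha_k}{\alpha_k}\cdot\frac{\rho-a}{\|\x_j\|_2}\ \ge\ \frac{1-\alpha_k}{\alpha_k}\,\text{dist}_S(\s_k,\cC_j^{=}).
\]
Finally, since $h_j(\r_k)>0$, index $j$ cannot belong to $\cA_k$: if it did, optimality of $\w_{\cA_k}^\star$ for the restricted problem together with Proposition~\ref{prop:optimality} would force $|\x_j^\top\r_k|\le r_\lambda'(0)$. But at iteration $k-1$ the working-set update puts into $\cA_k$ every index $j'$ with $\text{dist}_S(\s_k,\cC_{j'}^{=})\le\tau_{k-1}$; hence $\text{dist}_S(\s_k,\cC_j^{=})>\tau_{k-1}$, and the display above gives $\text{dist}(\r_k,\cC_j)>\frac{1-\alpha_k}{\alpha_k}\tau_{k-1}$, which is the claim.

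The step I expect to be the main obstacle is the identity $\alpha_k(b-a)=\rho-a$: feasibility of $\s_{k+1}$ by itself only yields the one-sided bound $\alpha_k(b-a)\le\rho-a$, equivalently $\frac{1-\alpha_k}{\alpha_k}\ge\frac{b-\rho}{\rho-a}$, which points the wrong way for the estimate above, and equality holds exactly for the slab that becomes active in the definition of $\alpha_k$. So I would first observe that, $h_j$ being the composition of the affine map $\x_j^\top\cdot$ with $t\mapsto|t|-\rho$, the function $\alpha\mapsto h_j(\alpha\r_k+(1-\alpha)\s_k)$ is convex with a negative value at $\alpha=0$ and a positive one at $\alpha=1$, hence it crosses $\partial\cC_j$ exactly once on $[0,1]$ and there is a well-defined tight slab at which $\alpha_k$ is attained; then I carry out the argument above for it. The remaining pieces --- the two elementary distance formulas and the bookkeeping linking ``$j\notin\cA_k$'' to $\tau_{k-1}$ via Proposition~\ref{prop:optimality} and the update rule --- are routine.
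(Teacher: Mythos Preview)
Your argument is correct, and you put your finger on exactly the point the paper glosses over: the identity $\x_j^\top\s_{k+1}=\rho$ (equivalently $h_j(\s_{k+1})=0$) holds only for the constraint that becomes tight in the line search defining $\alpha_k$, and for a non-tight $j$ the feasibility of $\s_{k+1}$ gives the inequality in the wrong direction. The paper's proof makes the same restriction, writing ``we have chosen $j$ as the index of the set that makes $\s_{k+1}\not\in\cC$'' without further comment; you flag it explicitly and propose to prove the bound for the tight slab, which is indeed all that is used downstream in Theorem~\ref{prop:convergence} (the tight index is always added to $\cA_{k+1}$).

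Where you differ is in the mechanics. The paper works in $\R^n$: it writes $\text{dist}(\r_k,\cC_j)=\|\z_k-\r_k\|$ with $\z_k$ the projection of $\r_k$ onto $\partial\cC_j$, substitutes $\r_k=\tfrac{1}{\alpha_k}\s_{k+1}-\tfrac{1-\alpha_k}{\alpha_k}\s_k$, pulls out the factor $\tfrac{1-\alpha_k}{\alpha_k}$, and then lower-bounds the remaining norm $\bigl\|{-}\tfrac{\alpha_k}{1-\alpha_k}\z_k+\tfrac{1}{1-\alpha_k}\s_{k+1}-\s_k\bigr\|$ by $\text{dist}_S(\s_k,\cC_j^=)\ge\tau_{k-1}$, using that the affine combination of the two boundary points $\z_k,\s_{k+1}$ lies on the same hyperplane $\x_j^\top\a=\rho$ (the paper phrases this as ``not a convex combination, hence $h_j\ge 0$'', which is a bit loose but ultimately right). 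Your one-dimensional reduction via $a=\x_j^\top\s_k$, $b=\x_j^\top\r_k$ replaces that geometric step by explicit arithmetic, yielding the closed forms for both distances and for $\tfrac{1-\alpha_k}{\alpha_k}$ directly; the chain $\rho-a\ge\rho-|a|$ then does the work that the paper's norm bound does. Your route is more elementary and makes the role of the tight constraint completely transparent; the paper's route is coordinate-free but relies on the same hidden assumption. The bookkeeping step ($j\notin\cA_k\Rightarrow\text{dist}_S(\s_k,\cC_j^=)>\tau_{k-1}$) is identical in both.
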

The proof of this lemma is available in the supplementary material.
From the above Proposition \ref{prop:w} and Lemma \ref{prop:dist}, we can ensure that the sequence
$\{\tilde \w_{\cA_{k}} \}$ produced by Algorithm \ref{alg:activeset_ncvx} converges
towards a stationary point under mild conditions on the inner solver.

\begin{theorem}\label{prop:convergence} Suppose that for each step $k$, the algorithm solving the inner problem ensures	a decrease in the objective value in the form
\begin{align*}
	f(\tilde \w_{\cA_{k+1}}^\star) - f(\tilde \w_{\cA_k}^\star) \leq - \gamma_k \|\tilde \w_{\cA_{k+1}}^\star - \tilde \w_{\cA_k}^\star\|_2^2 \enspace.
\end{align*}
with $\forall k, \,\gamma_k \geq \underline{\gamma} > 0$. For the inner solver, we also impose
that when solving the problem with set $\cA_{k+1}$, the inner solver is warm-started with $\w_{\cA_{k}}^\star$. Assume also that
$\|\X\|_2 > 0$,  $\tau_k \geq \underline{\tau} >0$  and $h_j$ satisfies assumption in Lemma \ref{prop:dist}, then the sequence of $\alpha_k$ produced by Algorithm \ref{alg:activeset_ncvx} converges towards $1$ and $\forall j,\,\,\lim_{k \rightarrow \infty} |\x_j^\top \r_{k}| \leq r^\prime_\lambda(0)$.
\end{theorem}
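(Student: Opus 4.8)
The plan is to turn the hypothesised sufficient decrease into vanishing increments of the outer iterates, feed these into Proposition~\ref{prop:w} and Lemma~\ref{prop:dist} to force $\alpha_k\to1$, and then transfer feasibility from the pseudo-residuals $\s_k$ to the true residuals $\r_k$. For the first part: since the right-hand side of the assumed decrease is non-positive, $\{f(\tilde\w_{\cA_k}^\star)\}_k$ is non-increasing, and being lower bounded for the penalties considered it converges; telescoping the decrease inequality and using $\gamma_k\ge\underline\gamma>0$ gives $\sum_k\|\tilde\w_{\cA_{k+1}}^\star-\tilde\w_{\cA_k}^\star\|_2^2<\infty$, hence $\|\tilde\w_{\cA_{k+1}}^\star-\tilde\w_{\cA_k}^\star\|_2\to0$. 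The warm-start hypothesis is what makes the decrease assumption consistent: the restricted solve over $\cA_{k+1}\supseteq\cA_k$ is initialised at the feasible point $\tilde\w_{\cA_k}^\star$.

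Next I would show $\alpha_k\to1$. If $\alpha_k=1$ at some step, then $\r_k=\s_{k+1}\in\cC$ and, by Proposition~\ref{prop:optalpha}, $\tilde\w_{\cA_k}^\star$ is stationary and the algorithm stops; so assume $\alpha_k<1$ for all $k$. Maximality of $\alpha_k$ together with convexity of $\a\mapsto h_j(\a)$ along the segment $[\s_k,\r_k]$ produces a ``blocking'' index $j^\star$ with $h_{j^\star}(\s_{k+1})=0$ and $h_{j^\star}(\r_k)>0$; in particular $\text{dist}(\r_k,\cC_{j^\star})>0$, and by the theorem's standing assumption $h_{j^\star}(\s_k)<0$ (which also forces $\alpha_k>0$). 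Since $\r_k\in\cC_{\cA_k}$ by Proposition~\ref{prop:optimality}, $j^\star\notin\cA_k$; and since $\text{dist}_S(\s_{k+1},\cC_{j^\star}^=)=0\le\tau_k$, index $j^\star$ is added, so $j^\star\in\cA_{k+1}\setminus\cA_k$. Chaining Proposition~\ref{prop:w} (applicable as $\text{dist}(\r_k,\cC_{j^\star})>0$) and Lemma~\ref{prop:dist} (applicable at step $k\ge2$ under the standing assumption) gives
\begin{align*}
\|\tilde\w_{\cA_{k+1}}^\star-\tilde\w_{\cA_k}^\star\|_2\;\ge\;\frac{1}{\|\X\|_2}\,\text{dist}(\r_k,\cC_{j^\star})\;\ge\;\frac{\underline\tau}{\|\X\|_2}\cdot\frac{1-\alpha_k}{\alpha_k}\enspace,
\end{align*}
and since the left-hand side tends to $0$ while $\|\X\|_2,\underline\tau>0$ are fixed, $(1-\alpha_k)/\alpha_k\to0$, i.e.\ $\alpha_k\to1$.

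To conclude, I would bound the residuals: from $f(\tilde\w_{\cA_k}^\star)\le f(\tilde\w_{\cA_1}^\star)$ and $f\ge\tfrac12\|\r_k\|_2^2$ we get $\|\r_k\|_2\le R$ for all $k$, and since each $\s_{k+1}$ is a convex combination of $\s_k$ and $\r_k$, induction gives $\|\s_k\|_2\le\max(\|\s_1\|_2,R)$, so $\|\r_k-\s_k\|_2\le C$ for a fixed $C$. Then $\|\r_k-\s_{k+1}\|_2=(1-\alpha_k)\|\r_k-\s_k\|_2\le(1-\alpha_k)C\to0$, and using $\s_{k+1}\in\cC\subseteq\cC_j$, i.e.\ $|\x_j^\top\s_{k+1}|\le r_\lambda^\prime(0)$, together with $|\x_j^\top\r_k|\le|\x_j^\top\s_{k+1}|+\|\x_j\|_2\|\r_k-\s_{k+1}\|_2$, we obtain $\lim_k|\x_j^\top\r_k|\le r_\lambda^\prime(0)$ for every $j$. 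The main obstacle is the middle step: one must isolate a single feature $j^\star$ that is simultaneously the constraint stopping $[\s_k,\r_k]$ at $\s_{k+1}$, absent from $\cA_k$, actually added to $\cA_{k+1}$, and eligible for Lemma~\ref{prop:dist}, so that Proposition~\ref{prop:w} and Lemma~\ref{prop:dist} can be combined; the first and last steps are routine (a descent-lemma telescoping and a boundedness estimate).
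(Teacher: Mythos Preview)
Your proof is correct and follows essentially the same route as the paper: telescope the sufficient-decrease inequality, plug in the lower bound obtained by chaining Proposition~\ref{prop:w} with Lemma~\ref{prop:dist}, deduce $\alpha_k\to1$, and then pass feasibility from $\s_{k+1}\in\cC$ to $\r_k$. Your treatment of the middle step is in fact more careful than the paper's, which combines Proposition~\ref{prop:w} and Lemma~\ref{prop:dist} without explicitly exhibiting the blocking index $j^\star$ and checking that it is added to $\cA_{k+1}$.

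The only notable difference is the final step: rather than bounding $\|\r_k\|_2$ and $\|\s_k\|_2$ uniformly, the paper writes $\x_j^\top\r_k=\tfrac{1}{\alpha_k}\x_j^\top\s_{k+1}-\tfrac{1-\alpha_k}{\alpha_k}\x_j^\top\s_k$ and applies the triangle inequality directly with $|\x_j^\top\s_k|,|\x_j^\top\s_{k+1}|\le r_\lambda'(0)$ (both lie in $\cC$), which avoids the boundedness detour entirely. Your argument is valid but slightly longer here.
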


The above theorem ensures convergence to a stationary point under some conditions on the inner solver and on  the $\gamma_k$'s which needs to be lower bounded by  $\underline{\gamma} > 0$ .
Several algorithms may satisfy this assumption.
For instance, any first-order iterative algorithm which selects its step size as $t_k$ based on line search criterion of the form
	$\forall k,\,\,f(\w_{k+1})
	\leq
	f(\w_k) - \frac{\sigma}{2} t_k \|\w_{k+1} - \w_{k}\|_2^2 \enspace,
$
where $\sigma$ is a constant in the interval $(0,1)$,
provides such a guarantee. This is the case of the generalized proximal
algorithm of \citet{gong2013general}[Section 2.3.2] or proximal Newton approaches \citep{rakotomamonjy2015dc}, assuming that $f$ is differentiable with
gradient Lipschitz and $r_\lambda(\cdot)$ admits a proximal operator.
Since non-convex block coordinate descent algorithms \citep{breheny2011coordinate}
can also be interpreted as proximal algorithm, they also satisfy this sufficient  decrease condition under the same assumptions than proximal approaches.

Another important condition for convergence is based on the parameter $\tau_k$. We note the lower bound
$\underline{\tau}$ can be set to any arbitrary small positive value. At a non-optimal for the full problem $\tilde \w_{\cA_{k}}$, and as small as this lower bound is, the set $\{j : {\rm{dist}}_S(\s_{k+1}, \cC_{j}^=) \leq \underline{\tau}\}$ always contains at least the index $j$
that makes $\alpha_k$ maximal and corresponds to the $j$ such that ${\rm{dist}}_S(\s_{k+1}, \cC_{j}^=)=0$  (see Line 4 of the algorithm). This would correspond to updating the working set by one element at each iteration.

\rev{The above theorem states about the convergence of the working set strategy. We want to emphasize here that the convergence rate of the whole algorithm \ref{alg:activeset_ncvx}
(working set + inner solver) depends on the convergence rate of the inner solver. For instance, if we consider as an inner solver the proximal algorithm
of \citet{gong2013general}, then the convergence rate for each inner problem
is of the form $C \cdot\frac{\|f(\w_0) - f(\w^\star)\|}{T}$ where $C$ is a constant depending on the inner problem, $T$ the total number of iterations for that solver, 
and $\w_0$ the initial point when solving that problem. Since Algorithm 
\ref{alg:activeset_ncvx} runs this inner solver several times, the convergence
rate is still in $\mathcal{O}(n)$ but with a different constant. The gain 
in computation time achieved by using a working set strategy comes from the fact that each inner solver involves far fewer variables than the full problem dimensionality $d$ and thus gradients are cheaper to compute.}

 \paragraph{Inexact inner solver} One key point when considering a meta-solver like Blitz \citep{johnson2015blitz} or a working set algorithm  is that for some approaches,  theoretical properties hold only when the solution of the inner solver is exact. This is for instance the case for the
 SimpleSVM algorithm of \citet{vishwanathan2003simplesvm} or the active set
 algorithm proposed by \citet{boisbunon2014active}. The convergence of these approaches are based
 on non-cyclicity of the working set selection (prohibiting pruning) and thus on the ability of solving
 exactly the inner problem.
 For the approach we propose, we show next that the distance between two consecutive inexact solutions of the inner problem is still lower bounded.

 \begin{proposition} Let $\w_{\cA_{k}}^\star$ and $\w_{\cA_{k+1}}^\star$ the approximate solutions of the inner problem  with respectively the working sets $\cA_{k}$ and $\cA_{k+1}$, as defined in Proposition \ref{prop:w}. Assume that
 $\w_{\cA_{k+1}}^\star$ has been obtained through a tolerance of $\xi_{k+1} \leq \tau_k$ of its Fermat's condition (\eg for the log sum penalty, Equation \eqref{eq:optcond:lsp} are satisfied up to $\xi_{k+1}$), then the following inequality holds :
 $$
	 \|\tilde \w_{\cA_{k+1}}^\star - \tilde \w_{\cA_k}^\star\|_2^2 \geq \frac{1}{\|\X\|_2} \big({\rm dist}(\r_k, \cC_j) - \xi_{k+1} \big).
 $$
 \end{proposition}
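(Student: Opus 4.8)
The plan is to replay the proof of Proposition~\ref{prop:w}, but with the exact membership $\r_{k+1}\in\cC_{\cA_{k+1}}$ replaced by an approximate one quantified by $\xi_{k+1}$. First I would keep the first display of that proof verbatim: since $\r_k=\res(\w_{\cA_k}^\star)=\y-\X\tilde\w_{\cA_k}^\star$ and $\r_{k+1}=\y-\X\tilde\w_{\cA_{k+1}}^\star$ (these identities do not use exactness), we have $\|\r_{k+1}-\r_k\|_2=\|\X(\tilde\w_{\cA_{k+1}}^\star-\tilde\w_{\cA_k}^\star)\|_2\le\|\X\|_2\,\|\tilde\w_{\cA_{k+1}}^\star-\tilde\w_{\cA_k}^\star\|_2$. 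So it suffices to lower bound $\|\r_{k+1}-\r_k\|_2$ by $\mathrm{dist}(\r_k,\cC_j)-\xi_{k+1}$ for the index $j=j_i$ with $\mathrm{dist}(\r_k,\cC_{j_i})>0$.

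Second, I would convert the inner-solver tolerance into a geometric statement, namely $\mathrm{dist}(\r_{k+1},\cC_j)\le\xi_{k+1}$. The index $j=j_i$ belongs to $\cA_{k+1}$ (it is one of the freshly added features and is not pruned), so $\r_{k+1}$ must satisfy the restricted Fermat condition associated with $j$ up to $\xi_{k+1}$. If $w_{j,\cA_{k+1}}^\star=0$, the approximate zero-weight condition reads $|\x_j^\top\r_{k+1}|\le r_\lambda'(0)+\xi_{k+1}$; if $w_{j,\cA_{k+1}}^\star\neq0$, the approximate stationarity reads $|\x_j^\top\r_{k+1}+c_j|\le\xi_{k+1}$ with a penalty-dependent correction $c_j$ obeying $|c_j|\le r_\lambda'(0)$ (for the log sum penalty $c_j=\lambda\sign(w_j)/(\theta+|w_j|)$ and $r_\lambda'(0)=\lambda/\theta$, cf.\ Equation~\eqref{eq:optcond:lsp}), so again $|\x_j^\top\r_{k+1}|\le r_\lambda'(0)+\xi_{k+1}$ by the triangle inequality. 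Since the distance from a point $\a$ to the slab $\cC_j=\{\a:|\x_j^\top\a|\le r_\lambda'(0)\}$ equals $\max\bigl(0,\,|\x_j^\top\a|-r_\lambda'(0)\bigr)/\|\x_j\|_2$, the column normalization in force in the paper ($\|\x_j\|_2\le1$) yields $\mathrm{dist}(\r_{k+1},\cC_j)\le\xi_{k+1}$.

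Third, I would close with the triangle inequality for the point-to-set distance: $\mathrm{dist}(\r_k,\cC_j)\le\|\r_k-\r_{k+1}\|_2+\mathrm{dist}(\r_{k+1},\cC_j)\le\|\r_k-\r_{k+1}\|_2+\xi_{k+1}$, hence $\|\r_{k+1}-\r_k\|_2\ge\mathrm{dist}(\r_k,\cC_j)-\xi_{k+1}$. Combining with the first step gives $\|\tilde\w_{\cA_{k+1}}^\star-\tilde\w_{\cA_k}^\star\|_2\ge\frac{1}{\|\X\|_2}\bigl(\mathrm{dist}(\r_k,\cC_j)-\xi_{k+1}\bigr)$, which is the stated bound (the exponent $2$ on the left-hand side of the statement appears to be a typo for the first power, consistent with Proposition~\ref{prop:w}). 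The hypothesis $\xi_{k+1}\le\tau_k$ guarantees this bound is compatible with the working-set update rule, i.e.\ the lower bound is not vacuous on the features that were actually selected.

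The main obstacle I expect is the second step: making rigorous the passage from ``Fermat's condition satisfied up to tolerance $\xi_{k+1}$'' to ``$\r_{k+1}$ lies within Euclidean distance $\xi_{k+1}$ of $\cC_j$''. This requires being explicit about the normalization of the columns $\x_j$ (since the point-to-slab distance carries a $1/\|\x_j\|_2$ factor, one needs $\|\x_j\|_2\le 1$, or else the bound picks up $1/\min_j\|\x_j\|_2$), and about the fact that the penalty-dependent correction term in the stationarity equation is uniformly dominated by $r_\lambda'(0)$ — this is exactly what lets the non-zero-weight case reduce to the same half-space inequality $|\x_j^\top\r_{k+1}|\le r_\lambda'(0)+\xi_{k+1}$ as the zero-weight case, and it holds for MCP, SCAD and log sum but should be checked for the regularizer at hand.
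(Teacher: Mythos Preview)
Your proposal is correct and in two places cleaner than the paper's own proof. First, the paper does not \emph{derive} $\mathrm{dist}(\r_{k+1},\cC_j)\le\xi_{k+1}$ from the approximate Fermat condition at all; it simply takes this inequality as the working hypothesis (``Let us assume then that $\r_{k+1}\notin\cC_{\cA_{k+1}}$ and $\mathrm{dist}(\r_{k+1},\cC_j)\le\xi_{k+1}$''), so your second step actually fills a gap the paper leaves open, and the normalization caveat you raise is real. Second, to pass from $\mathrm{dist}(\r_{k+1},\cC_j)\le\xi_{k+1}$ to $\|\r_{k+1}-\r_k\|\ge\mathrm{dist}(\r_k,\cC_j)-\xi_{k+1}$, you invoke the one-line $1$-Lipschitz property of the distance-to-a-set function, whereas the paper builds a small geometric detour: it introduces $\u$ (the projection of $\r_k$ onto $\cC_j$) and $\p$ (the projection of $\r_{k+1}$ onto the segment $[\u,\r_k]$), uses a Pythagorean identity to get $\|\r_{k+1}-\r_k\|\ge\|\r_k-\p\|$, then writes $\|\r_k-\p\|\ge\|\r_k-\u\|-\|\u-\p\|$ and asserts $\|\u-\p\|=\mathrm{dist}(\r_{k+1},\cC_j)$. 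Your route is shorter and sidesteps the need to justify that last equality; the paper's construction buys nothing additional. Your remark that the squared norm on the left of the statement is a typo for the first power is also correct.
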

\begin{proof} First note that if $\w_{\cA_{k+1}}^\star$ is such that $\r_{k+1} \in \cC_{\cA_{k+1}}$
then we are in the same condition than in Proposition \ref{prop:w} and the same proof applies. Let us assume then
that $\r_{k+1} \not \in \cC_{\cA_{k+1}}$ and $\text{dist}(\r_{k+1}, \cC_{j}) \leq \xi_{k+1}$.
Define as $\u$ the point in $\cC_{j}$ that defines the distance of $\r_k$ to $\cC_j$ and
as $\p$ the point that minimizes the distance between $\r_{k+1}$ and the segment $[\u,\r_k]$. Then, owing to simple geometrical arguments and orthogonality we have : $\|\r_{k+1} - \r_{k}\|^2 = \|\r_{k+1} - \p \|^2 + \|\p - \r_{k}\|^2$ and thus $\|\r_{k+1} - \r_{k}\| \geq \|\r_{k} - \p\|$. Now, because  $\p$ belongs to the segment defined by
 $\u$ and $\r_{k}$,  we have
$$
\|\r_{k+1} - \r_{k}\| \geq  \|\r_{k} - \u\| -  \|\u - \p\| \geq
\text{dist}(\r_{k}, \cC_{j}) - \xi_{k+1}
$$
where the last inequality comes from the fact that $\|\u - \p\| =
\text{dist}(\r_{k+1}, \cC_{j}) \leq \xi_{k+1}$. Plugging this inequality into
Equation \eqref{eq:dist1} completes the proof.
\end{proof}
Note that the above lower bound is meaningful only if the tolerance $\xi_{k+1}$ is smaller than the distance of the residual to the set $\cC_{j}$. This is a reasonable assumption to be made since we expect $\r_k$ to violate $\cC_{j}$.  Now, we can derive condition of convergence towards a stationary point of the full problem.
\begin{corollary} Under the assumption of Theorem \ref{prop:convergence} and
	assuming that the sequence of tolerances $\xi_{k}$ is such that $\sum_k \xi_k < \infty$, then Algorithm \ref{alg:activeset_ncvx} produces a sequence of iterates that converges towards a stationary point.
\end{corollary}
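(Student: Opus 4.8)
The plan is to replay the proof of Theorem~\ref{prop:convergence}, substituting the inexact distance bound (the unnamed Proposition just above) for Proposition~\ref{prop:w} and using the summability $\sum_k\xi_k<\infty$ to absorb the extra error terms. First I would check that $\{f(\tilde\w_{\cA_{k}}^\star)\}$ still converges: warm-starting the inner solver at $\w_{\cA_{k}}^\star$ — whose zero-extension to $\cA_{k+1}$ is unchanged by the pruning step, since pruned coordinates carry zero weight — together with the descent property of the inner solver gives $f(\tilde\w_{\cA_{k+1}}^\star)\le f(\tilde\w_{\cA_{k}}^\star)$, and since $f$ is bounded below the objective sequence converges and $f(\tilde\w_{\cA_{k}}^\star)-f(\tilde\w_{\cA_{k+1}}^\star)\to0$. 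Feeding this into the sufficient-decrease hypothesis with $\gamma_k\ge\underline{\gamma}>0$ yields $\|\tilde\w_{\cA_{k+1}}^\star-\tilde\w_{\cA_{k}}^\star\|_2\to0$. (If the inexact solve only guarantees this decrease up to an additive term of order $\xi_{k+1}$, then telescoping together with $\sum_k\xi_k<\infty$ still bounds $\sum_k\gamma_k\|\tilde\w_{\cA_{k+1}}^\star-\tilde\w_{\cA_{k}}^\star\|_2^2$, with the same conclusion; this is precisely where summability rather than mere $\xi_k\to0$ is used.)

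Next I would pass to the geometric quantities. For the feature $j$ selected at iteration $k$ — the one with $\text{dist}_S(\s_{k+1},\cC_j^{=})=0$, hence added to $\cA_{k+1}$ because $\tau_{k-1}\ge\underline{\tau}>0$, and with $h_j(\r_k)>0$ and $h_j(\s_k)<0$ (the standing assumption of Theorem~\ref{prop:convergence}, using $\s_k\in\cC$) — the inexact proposition gives
\[
\text{dist}(\r_k,\cC_j)\ \le\ \|\X\|_2\,\|\tilde\w_{\cA_{k+1}}^\star-\tilde\w_{\cA_{k}}^\star\|_2^2+\xi_{k+1}\ \longrightarrow\ 0 ,
\]
using $\xi_k\to0$ (note $\xi_{k+1}\le\tau_k$ and $\xi_{k+1}<\text{dist}(\r_k,\cC_j)$ hold for $k$ large, so the bound is admissible and informative). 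Lemma~\ref{prop:dist} then forces
\[
\frac{1-\alpha_k}{\alpha_k}\,\underline{\tau}\ \le\ \frac{1-\alpha_k}{\alpha_k}\,\tau_{k-1}\ \le\ \text{dist}(\r_k,\cC_j)\ \longrightarrow\ 0 ,
\]
so $\alpha_k\to1$. Finally, since $\|\r_k\|_2$ and $\|\s_k\|_2$ stay uniformly bounded ($\tfrac12\|\r_k\|_2^2\le f(\tilde\w_{\cA_{k}}^\star)\le f(\tilde\w_{\cA_0})$, and $\s_k$ is bounded by induction on its convex recursion from $\s_1\in\cC$), the identity $\s_{k+1}-\r_k=(1-\alpha_k)(\s_k-\r_k)$ gives $\text{dist}(\r_k,\cC)\le\|\s_{k+1}-\r_k\|_2\to0$; since $\cC=\bigcap_j\cC_j$ this is $\limsup_k|\x_j^\top\r_k|\le r_\lambda^\prime(0)$ for every $j$, which, together with the restricted Fermat condition on $\cA_k$ holding up to $\xi_k\to0$, is exactly the conclusion of Theorem~\ref{prop:convergence} and shows $\{\tilde\w_{\cA_{k}}^\star\}$ converges to a stationary point of \eqref{eq:generalprob}.

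The step I expect to be the main obstacle is the first one: the inner solver's per-iteration sufficient decrease is stated between consecutive \emph{inner} iterates, not between the warm-start $\w_{\cA_{k}}^\star$ and the returned approximate minimizer $\w_{\cA_{k+1}}^\star$, so one must argue either that the returned point has objective no larger than the first inner iterate (so the clean decrease survives) or absorb the discrepancy into $\xi_{k+1}$ and rely on $\sum_k\xi_k<\infty$ for the telescoped bound. The remaining bookkeeping — persistence along the run of the hypotheses of Lemma~\ref{prop:dist} and of the admissibility conditions $\xi_{k+1}\le\tau_k$, $\xi_{k+1}<\text{dist}(\r_k,\cC_j)$ — is routine, following from $\xi_k\to0$ while the violated distance stays bounded away from $0$ until the algorithm is near convergence.
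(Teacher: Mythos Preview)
Your proposal is correct and follows the same route the paper intends: replay the proof of Theorem~\ref{prop:convergence} with the inexact distance bound in place of Proposition~\ref{prop:w}, and use $\xi_k\to 0$ (implied by $\sum_k\xi_k<\infty$) to absorb the extra error. The paper's own ``proof'' is a single sentence to this effect, so your write-up is considerably more detailed than what the authors provide; your step-by-step organization (first $f(\tilde\w_{\cA_k}^\star)$ converges, then $\|\tilde\w_{\cA_{k+1}}^\star-\tilde\w_{\cA_k}^\star\|_2\to 0$, then chain through the inexact proposition and Lemma~\ref{prop:dist}) is a mild reorganization of the telescoping in the Theorem~\ref{prop:convergence} proof rather than a different argument. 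One cosmetic point: in your displayed bound $\text{dist}(\r_k,\cC_j)\le\|\X\|_2\,\|\tilde\w_{\cA_{k+1}}^\star-\tilde\w_{\cA_{k}}^\star\|_2^2+\xi_{k+1}$ the exponent should be $1$, not $2$ (this mirrors a typo in the paper's inexact proposition and does not affect your limit argument).
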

The proof follows the same steps as for Theorem \ref{prop:convergence}, with the addition that sequence $\{\xi_k\}$ is convergent and thus has been omitted. Note that the assumption of convergent sum of errors is a common assumption, notably in the proximal algorithm literature \citep{combettes2005signal,villa2013accelerated} and it helps guaranteeing convergence towards exact stationary point instead of an approximate convergence.

\begin{table*}[t]
	\caption{Running time in seconds of different algorithms on different problems. In the first column, we reported data, the tolerance on the stopping criterion
		and the constant $K$ such that $\lambda = K \max_j |\x_j^\top \y|$ (the larger the $K$, the sparser $\w^\star$ is).
		The small \emph{Toy} dataset has  $n=100$, $d=1000$ and $p=30$; the large one has $n=1000$, $d=5000$, $p=500$. For each inner solver, we
		bold the most efficient algorithm.
		The symbol "$-$" denotes that the algorithm did not finish one iteration in $24$ hours and the $0.0$ as a standard deviation means that only one iteration were terminated after $48$ hours.. The number in parenthesis is the number of non-zero weights in $\w_\cA^\star$. All experiments have been run on one single core of an Intel Xeon CPU E5-2680 clocked at 2,4Ghz.
		\label{table:toy} }
	~\\
	\resizebox{\linewidth}{!}{
		\begin{tabular}{l|rrrr|rrrr}
			\hline
			Data and Setting & MM prox & GIST    & MaxVC Gist  & FireWorks  Gist   & MM BCD &  BCD & MaxVC BCD & FireWorks BCD	\\\hline \hline
			Toy small - 1.00e-03 - 0.07 & 1.4$\pm$0.4 (34) & 0.8$\pm$0.2 (34) & 0.3$\pm$0.2 (34) & \textbf{0.2$\pm$0.1} (34) & 3.4$\pm$0.9 (34) & 14.2$\pm$4.9 (34) & 1.9$\pm$0.8 (34) & \textbf{1.5$\pm$0.9} (34)\\
			Toy small  - 1.00e-05 - 0.07  & 1.5$\pm$0.4 (34) & 1.4$\pm$0.6 (34) & 0.7$\pm$0.8 (34) & \textbf{0.4$\pm$0.1} (34) & 3.3$\pm$0.8 (34) & 22.9$\pm$11.0 (34) & 8.3$\pm$9.7 (34) & \textbf{2.7$\pm$1.2} (34)\\
			Toy small - 1.00e-03 - 0.01 & 11.2$\pm$1.2 (71) & 6.3$\pm$2.2 (71) & 1.6$\pm$0.6 (71) & \textbf{1.3$\pm$0.6} (71) & 83.7$\pm$18.6 (71) & 73.7$\pm$21.7 (71) & 15.6$\pm$4.5 (71) & \textbf{8.2$\pm$2.0} (71)\\
			Toy small  - 1.00e-05 - 0.01 & 17.6$\pm$6.0 (66) & 14.1$\pm$9.8 (66) & 7.1$\pm$5.3 (66) & \textbf{4.6$\pm$2.8} (66) & 88.2$\pm$23.3 (66) & 154.6$\pm$93.6 (66) & 67.0$\pm$44.5 (66) & \textbf{40.8$\pm$24.1} (66)\\
			Toy large - 1.00e-03 - 0.07 & 41.1$\pm$15.3 (365) & 26.2$\pm$13.0 (365) & \textbf{5.8$\pm$1.3} (365) & {8.2$\pm$3.3} (365) & 1040.8$\pm$0.0 (365) & 355.9$\pm$83.8 (365) & 82.7$\pm$19.3 (365) & \textbf{73.5$\pm$9.7} (365)\\
			Toy large - 1.00e-05 - 0.07 & - & 50.5$\pm$7.6 (371) & 36.8$\pm$13.3 (371) & \textbf{31.7$\pm$7.4} (371) & 1356.7$\pm$178 (371) & 1030.5$\pm$471.7 (371) & 561.7$\pm$208.8 (371) & \textbf{465.6$\pm$111.4} (371)\\
			Toy large - 1.00e-03 - 0.01 & 589.5$\pm$185.4 (758) & 91.6$\pm$22.9 (758) & 65.4$\pm$14.5 (758) & \textbf{34.9$\pm$4.1} (758) & 52848.8$\pm$0.0 (758) & 1192.1$\pm$340.1 (758) & 777.5$\pm$181.5 (758) & \textbf{337.0$\pm$46.3} (758)\\
			Toy large - 1.00e-05 - 0.01 & - & \textbf{583.8$\pm$140.7} (759) & 1020.6$\pm$250.6 (759) & 609.4$\pm$177.6 (759) & 60897$\pm$5990 (759) & 7847$\pm$2774 (759) & 12720$\pm$2520 (759) & \textbf{6699$\pm$1686} (759)\\\hline

																										\end{tabular}
	}
	\\~\\~\\
	\resizebox{\linewidth}{!}{
		\begin{tabular}{l|rrrr|rrrrr}
			\hline
			Data and Setting & MM prox & GIST    & MaxVC Gist  & FireWorks  Gist   & MM BCD &  BCD & MaxVC BCD & FireWorks BCD	\\\hline \hline

			Leukemia - 1.00e-03 - 0.07 & 6.3$\pm$2.0 (7) & 17.9$\pm$0.4 (7) & \textbf{0.2$\pm$0.0} (7) & 0.4$\pm$0.0 (7) & 3.8$\pm$0.7 (7) & 144.4$\pm$1.1 (7) & \textbf{0.8$\pm$0.0} (7) & \textbf{0.8$\pm$0.0} (7)\\
			Leukemia - 1.00e-05 - 0.07 & 8.0$\pm$2.7 (9) & 26.1$\pm$0.6 (9) & \textbf{0.3$\pm$0.0} (9) & 0.5$\pm$0.0 (9) & 4.6$\pm$1.1 (9) & 218.8$\pm$1.1 (9) & 1.2$\pm$0.0 (9) & \textbf{1.1$\pm$0.0} (9)\\
			Leukemia - 1.00e-03 - 0.01 & 31.4$\pm$6.2 (41) & 186.1$\pm$1.7 (41) & \textbf{5.4$\pm$0.0} (41) & 5.5$\pm$0.0 (41) & 53.6$\pm$9.6 (41) & 1168.3$\pm$0.2 (41) & 19.9$\pm$0.0 (41) & \textbf{17.4$\pm$0.0} (41)\\
			Leukemia - 1.00e-05 - 0.07 & 71.4$\pm$7.5 (46) & 525.2$\pm$8.5 (46) & 20.3$\pm$0.0 (46) & \textbf{14.6$\pm$0.0} (46) & 65.5$\pm$4.9 (46) & 1412.8$\pm$0.3 (46) & 71.5$\pm$0.0 (46) & \textbf{42.7$\pm$0.0} (46)\\\hline

																		Newsgroup-3 - 1.00e-02 - 0.01 & 955.8$\pm$389.1 & 6041.1$\pm$7.2 & \textbf{6.5$\pm$0.0} & 8.3$\pm$0.0 & 7926.6$\pm$3183.6 & 3792.4$\pm$6.2 & \textbf{4.9$\pm$0.0} & 5.6$\pm$0.0\\
			Newsgroup-3 - 1.00e-03 - 0.01 & 1200.6$\pm$402.7 & 5790.6$\pm$8.0 & 49.8$\pm$0.1 & \textbf{36.6$\pm$0.0} & 12078.0$\pm$3879.1 & 24070.5$\pm$18 & 53.2$\pm$0.1 & \textbf{36.8$\pm$0.0}\\
			Newsgroup-3 - 1.00e-04 - 0.01  & 1237.9$\pm$415.5 & 5734.0$\pm$3.9 & 1439.3$\pm$2.4 & \textbf{326.1$\pm$0.2} & 12130.8$\pm$3849.7 & 37639.8$\pm$19 & 279.2$\pm$0.2 & \textbf{167.7$\pm$0.1}\\
															Newsgroup-5 - 1.00e-02 - 0.01 & - & 26711.1$\pm$44 & 1001.2$\pm$2.7 & \textbf{343.6$\pm$0.9} & -  & 77378.7$\pm$74 & 421.7$\pm$0.8 & \textbf{172.5$\pm$0.1}\\
			Newsgroup-5 - 1.00e-03 -  0.01& -  & 26685.6$\pm$14 & 2163.6$\pm$4.4 & \textbf{876.9$\pm$0.6} & - & 91603.9$\pm$0.0 & 728.9$\pm$2.9 & \textbf{312.3$\pm$0.6}\\
			Newsgroup-5 - 1.00e-04 - 0.01& - & 26752.5$\pm$15 & 4285.2$\pm$6.1 &   \textbf{1632.5$\pm$3.2} &  - & 117749.0$\pm$0.0 & 1093.7$\pm$3.7 & \textbf{554.2$\pm$1.0}\\
						Criteo - 1.00e-02 - 0.005 & - &  -  & - & - & - & - & 41095.3$\pm$2218 & \textbf{31052.7$\pm$1202}\\
			Criteo - 1.00e-03 - 0.005 & -  & - & - & - & - & - & 49006.7$\pm$1431 & \textbf{37534.6$\pm$1576}\\
			Criteo - 1.00e-04 - 0.005 & -  & - & - & - & - & - & 59303.8$\pm$1308 & \textbf{42773.9$\pm$1022}\\\hline
		\end{tabular}
	}
\end{table*}

\section{Numerical Experiments}
\label{sec:expe}

\paragraph{Set-up}~We now present some numerical studies showing the computational gain achieved by our approach. Our main baselines are algorithms that also feature convergence guarantees. As such,
we have considered, for solving the full problem a proximal algorithm \cite{gong2013general} and a coordinate descent approach \cite{breheny2011coordinate}; they are respectively denoted
as GIST and BCD.  We have also used those algorithms as inner solvers  into our working set algorithm, denoted as FireWorks (for FeasIble REsidual WORKing Set). All methods have been implemented in Python/Numpy \cite{numpy} and the code  will be published under MIT License.
As another baseline with theoretical convergence guarantees, we have considered a solver based on majorization-minimization (MM) approach,  which consists
in iteratively minimizing a majorization of the non-convex
objective function as in \cite{hunter2004tutorial,gasso2009recovering,rakotomamonjy19a}. Each iteration results in a weighted convex Lasso problem that we solve, after warm-starting with previous iteration result,
with a Blitz-based proximal Lasso or BCD Lasso (up to precision of $10^{-5}$ for
its optimality conditions). Our last baseline is the maximum-violating
optimality condition working set algorithm (MaxVC)  described in Algorithm
\ref{alg:maxvc_ncvx} in supplementary and that is known to be very
efficient, but does not come with a
convergence proof (though we conjecture it can be proved when
no pruning occurs).

For  all approaches, we leverage the closed-form proximal
operator available for several (non-convex) regularizers. For our experiments, we have used the log-sum penalty which has an hyperparameter
$\theta$ that has been set to $1$. For all algorithms, the stopping criterion
is based on the tolerance (either $10^{-3}$ or $10^{-5}$ ) over  Fermat's optimality condition given in Equation~ \ref{eq:fermat}
The used performance measure for comparing all algorithms is the CPU running time. For
all problems, we have set $\tau_k$ adaptively (by sorting as described in
Algorithm \ref{alg:activeset_ncvx} line 7) so as to add the same fixed number
$n_{\rm{added}}$ of features
into the working set of our FireWorks algorithm and for MaxVC. Results are averaged over $5$ different  runs.
\paragraph{Toy problem}
Here, the regression  matrix $\X \in \R^{n \times d}$ is drawn uniformly from a standard Gaussian distribution (zero-mean unit variance).
For given $n,d$ and a number {$p$} of active variables, the true coefficient vector $\w^{\rm true}$ is obtained as follows.
The {$p$} non-zero positions are chosen randomly, and their values are drawn from a zero-mean unit variance Gaussian distribution, to which we added $\pm 0.1$ according to $\sign(w_j^{\rm true})$.
Finally, the target vector is obtained as $\y = \X\w^{\rm true} + \e$ where $\e$ is a zero-mean Gaussian noise with standard deviation $\sigma=0.01$. For these problems, we have arbitrarily set $n_{\rm{added}}=30$ and extra experiments in the appendix illustrates the impact of this choice.
Table \ref{table:toy} presents the running time for
different algorithms to reach convergence under various settings. We note
that our FireWorks algorithm is faster than the genuine inner solver and (at least on par) with the MaxVC approach especially in setting where $\lambda$ is properly tuned with respect to the number of variables, \emph{ie} when the solution is not too sparse. Note that the MM+Blitz approaches is performing worse than all other methods in almost all settings. We explain this gain by the working set framework and the ability to prune
the working set, which size is therefore not monotonically increasing.

\begin{figure*}[t]\centering
	\includegraphics[width=3.9cm]{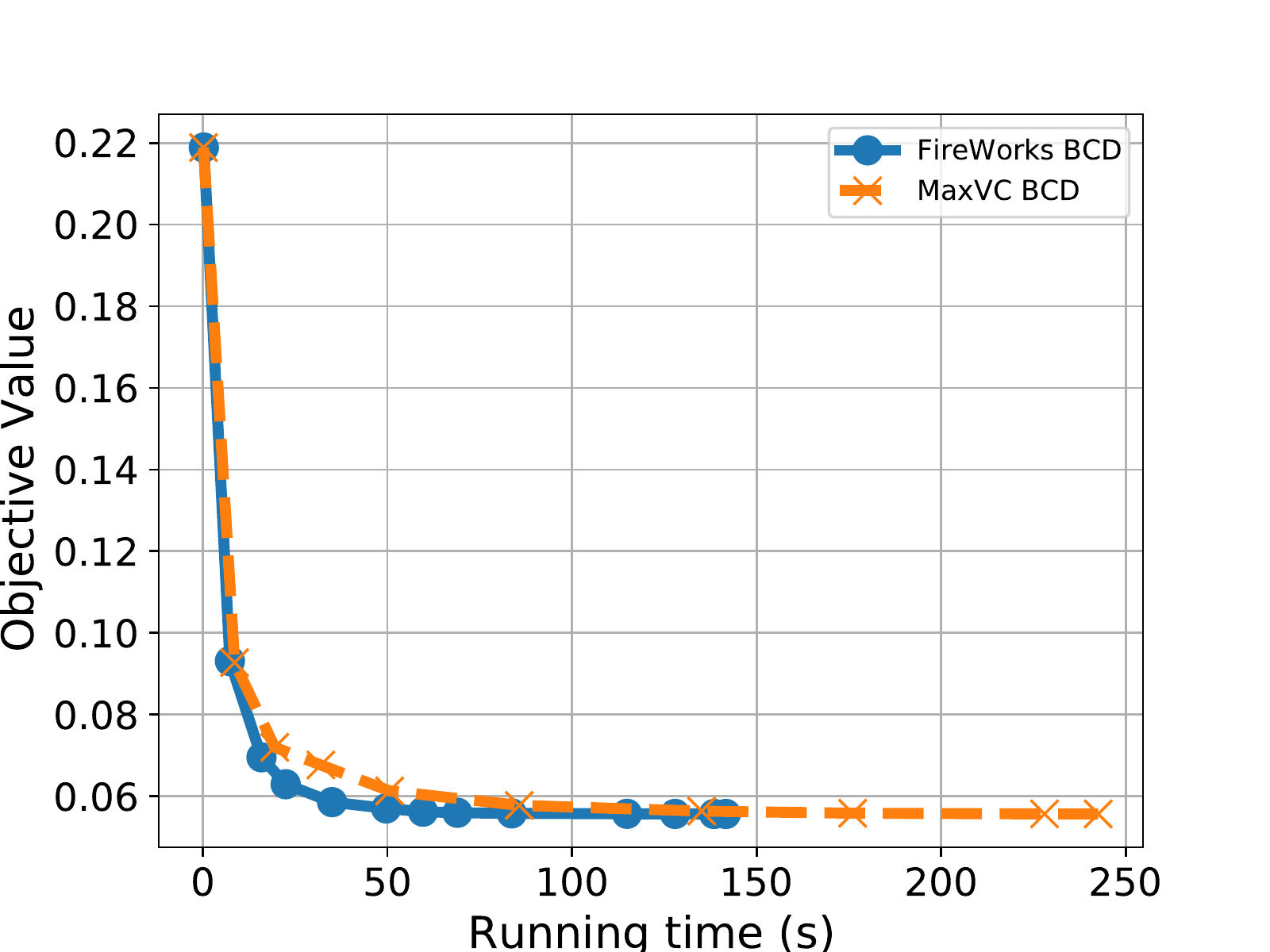}
	\includegraphics[width=3.9cm]{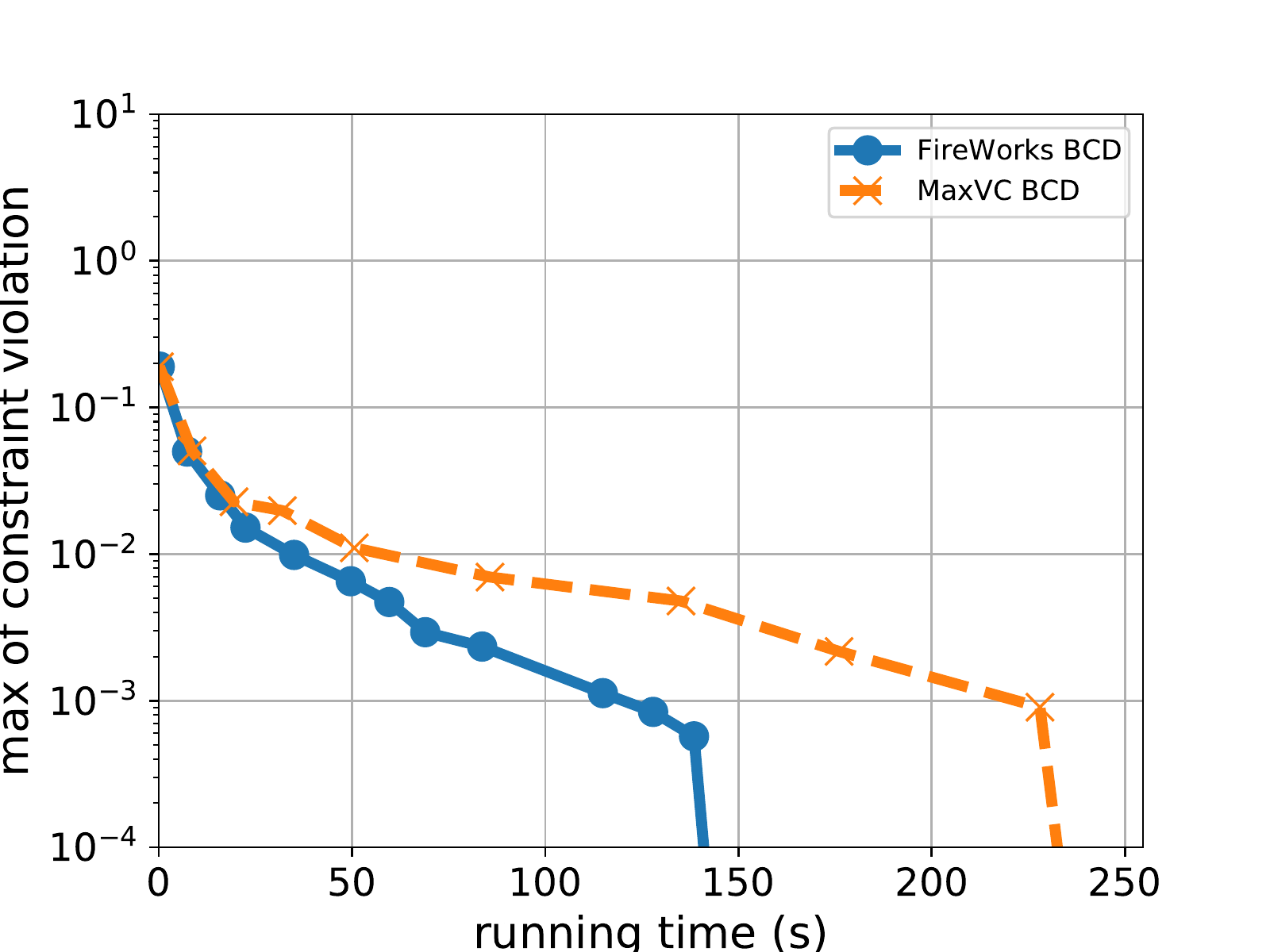}
	\includegraphics[width=3.9cm]{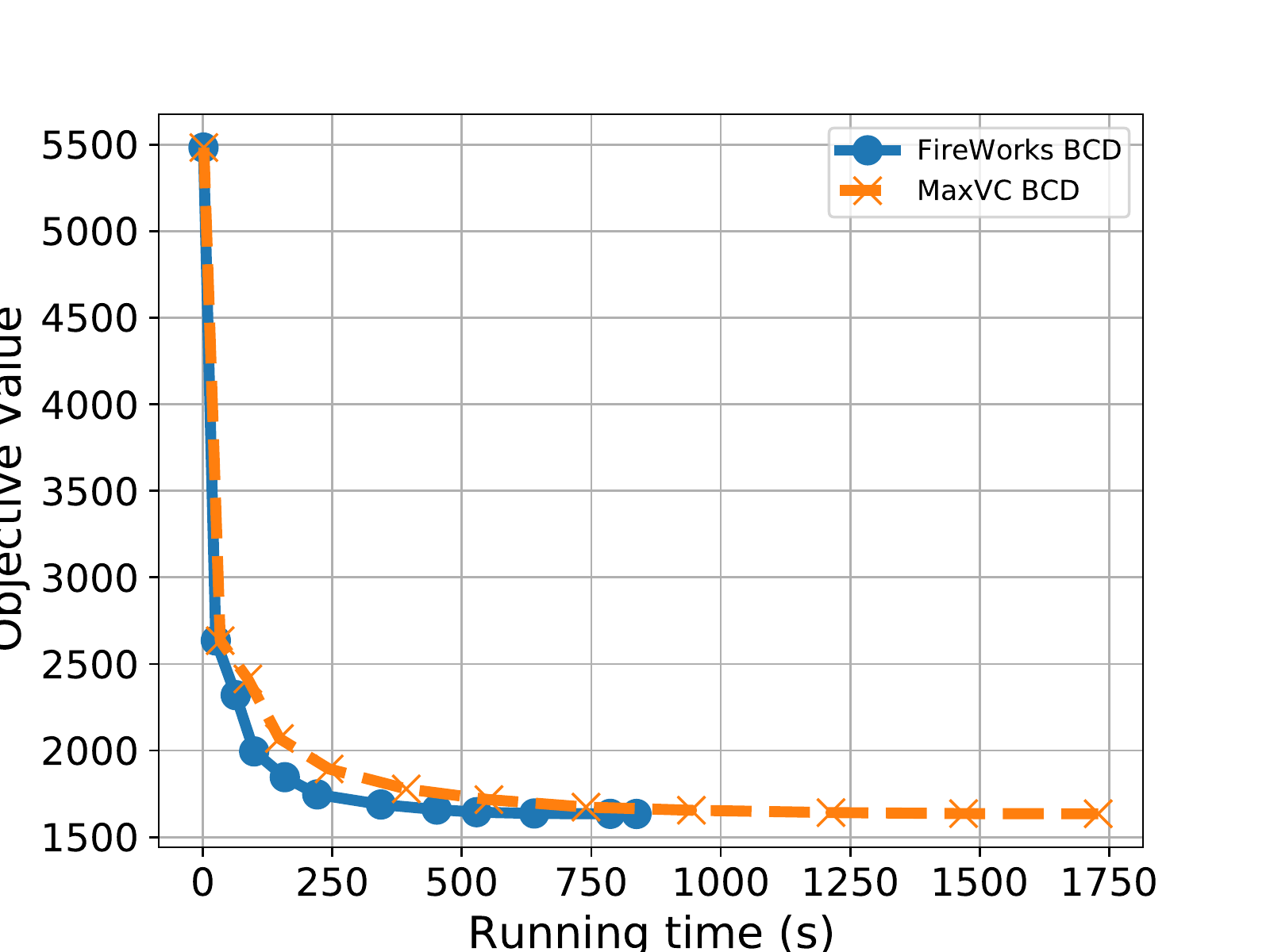}
	\includegraphics[width=3.9cm]{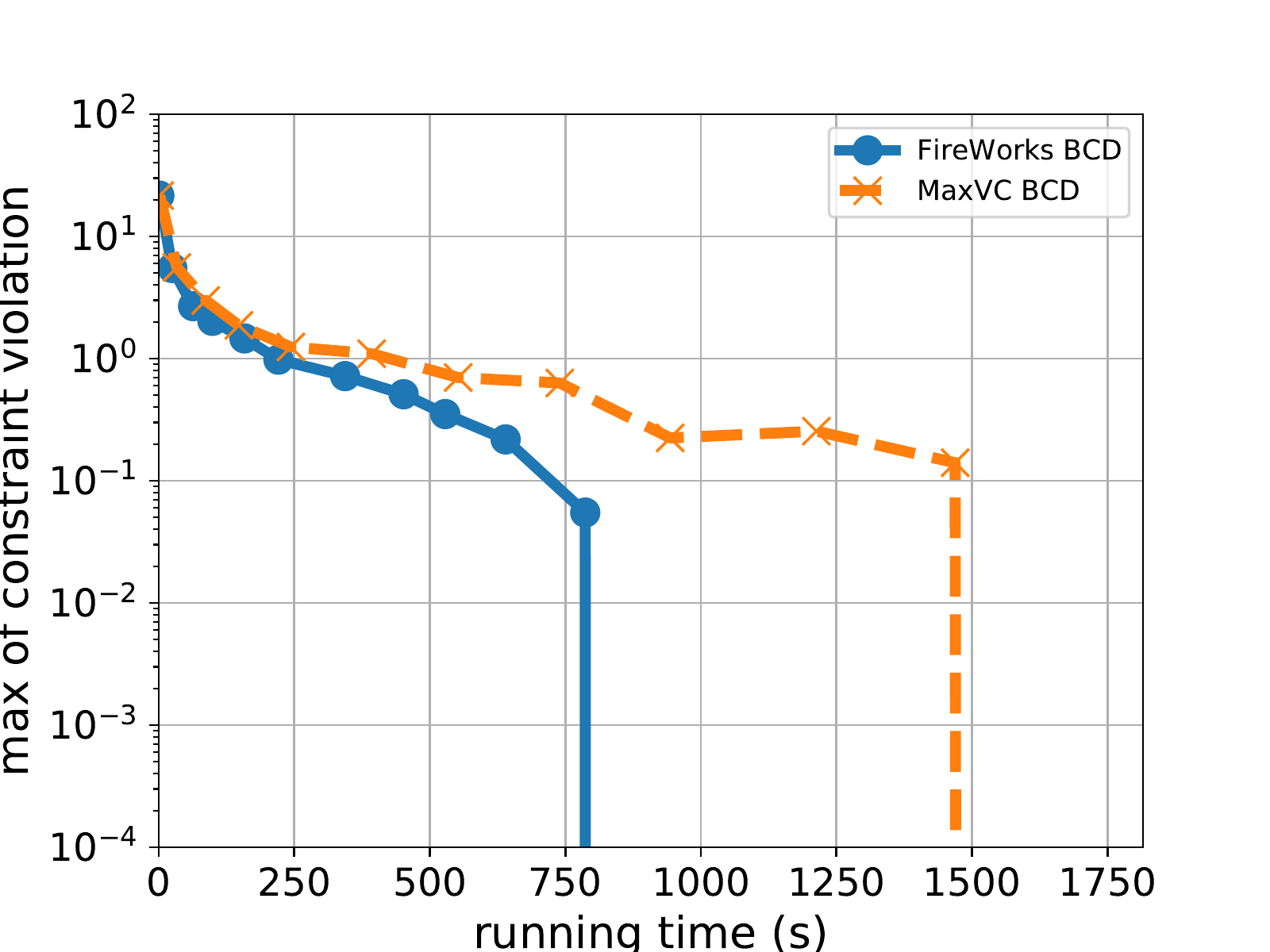}
	\caption{Example of evolution of the objective value and the maximum violation constraint on the $0$-valued weights. The tolerance on the inner problem is set to $10^{-6}$; (most-left) performance on \emph{Newsgroup-3}; (most-right) performance on \emph{Newsgroup-5}.
		\label{fig:objtime}  }
\end{figure*}

\paragraph{Real data}
We have reported comparisons on three real datasets. The first one is the \emph{Leukemia} dataset \citep{golub1999molecular} which has a dense regression matrix with $n=72$ and $d=7129$.  We have also considered sparse problem such as  \emph{newsgroups}  dataset in which we have kept only $3$ categories (\emph{religion}, \emph{atheism} and \emph{graphics}) resulting in $n=1441$, $d=26488$  and  5 categories \emph{comp} leading to  $n=4891$, $d=94414$ (see the supplemental for details).  For these  two problems, we have respectively $223173$  and $676247$ non-zeros elements in the related design matrix $\X$.
We have also used a large-scale dataset which is a subset of the Criteo Kaggle dataset
composed of $2$M samples and $1$M features, with about $78$M non-zero elements in $\X$.
 For \emph{Leukemia}, we have $n_{\rm{added}} = 30$  at each iteration, whereas we have added $300$ and $1000$ features respectively for the \emph{newsgroup} and \emph{Criteo} problem.
Figure \ref{fig:objtime} presents an example of how  objective value and
maximum constraint violation (measured as $\max_j (|\x_j^\top\r_k| - r'_\lambda(0))$)
 evolve during the optimization process for the two \emph{Newsgroup} datasets. We see in these examples that both MaxVC and FireWorks algorithms achieve approximately the same objective value whereas
our {FireWorks} approach converges faster.
Quantitative results are reported in the bottom part of Table \ref{table:toy}.
At first, we can note that the convex relaxation approach using MM and Blitz
is always more efficient than the baseline non-convex methods using either BCD
or GIST. Moreover, the table also
shows that  using FireWorks leads to a speedup of at least one order of magnitude  compared to the baseline algorithm and the MM approach.
For large $\lambda$ leading to sparse solutions, MaxVC is the most efficient approach on \emph{Leukemia}, while for large-scale datasets \emph{newsgroup-3} and \emph{newsgroup-5}, FireWorks is substantially faster than all competitors. For \emph{Criteo}, only the BCD working set
algorithms are able to terminate in reasonable time and FireWorks is more efficient than MaxVC. Again the MM+Blitz approach is performing worse than the
two non-convex active set algorithms and fails to converge in a reasonable time
for large datasets.

\paragraph{Running-time on a grid-search.} We report here the sum of running
time (in seconds) of FireWorks and MaxVC for solving the problem with $10$
different values of $\lambda$ varying from $0.6 \lambda_{\max}$ to $0.01
\lambda_{\max}$  with  $\lambda_{\max}=\max_j |\x_j^\top \y|$
on the Leukemia dataset. For a tolerance of $10^{-5}$, we have
for GIST as inner solver, MaxVC runs in 24.3s and Fireworks takes 18.8s while for BCD as inner
solver, we have for MaxVC 91s and for Fireworks 53.5s. Hence, in this context,
the running time of our approach is still better than the most efficient
competitor. We have similar results for the (small) toy problem. 

\paragraph{Additional experiments in supplementary.} Since our main metric for comparing our algorithm
to competitors is its computational running time, as a sanity check, we have also evaluated the quality of the estimate $\tilde \w^\star$. For instance, for the toy
problem we have measured whether our approach is able to recover the support of the true vector $\w^{\rm{true}}$. Our results show
that there is no approach that outperforms the others under other metrics. This 
makes clear that the gain in running time of FireWorks is not at the expense of worse estimate.
We have also reported some studies that analyze the impact of the parameter
$n_{added}$ (and the related $\tau_k$)  and of pruning 
on the running time of our algorithm FireWorks and on MaxVC. According to our results, the $1\%$ rule seems to be a good heuristic for both algorithms and across the range
of parameters, FireWorks is as efficient as MaxVC.

\section{Conclusion}
\label{sec:conclusion}

We have introduced in this paper a working set based meta-algorithm for non-convex regularized regression. By generalizing the concept of primal-dual approach in a non-convex setting, we
were able to derive a novel rule for updating the features optimized by an
iterative incremental algorithm. From a theoretical point of view, we showed
convergence of the algorithm, even when the inner problem is not
solved exactly but up to a certain tolerance. This is in contrast with the
classical maximal violating optimality condition algorithms approach whose convergence requires the exact resolution of each inner problem. Our experimental results show the
computational gain achieved for a given solver when applied directly on the full
variables or within our working set algorithm. The main limitation of our work is
that our provably convergent method is not always as efficient as heuristic
ones.

\paragraph{Broader and potential negative impact}
{We expect this work to benefit research and applications related to large
scale sparse learning problems. Since the work is a methodological work and as
such it is hard to see any foreseeable societal consequences without precise
applications. The computational gain from our algorithm can be interesting fro
practitioners from a computational (and financial) perspective but it can also
be counterbalanced by the potential use on larger dataset that this can also bring.
}

\section*{Acknowledgments}

This work benefited from the support of the project Chaire AI RAIMO, 3IA Côte d'Azur Investments ANR-19-P3IA-0002 of the French National Research Agency (ANR) and was performed using computing resources of CRIANN (Normandy, France),. This research was produced within the framework of Energy4Climate Interdisciplinary Center (E4C) of IP Paris and Ecole des Ponts ParisTech. This research was supported by 3rd Programme d’Investissements d’Avenir ANR-18-EUR-0006-02. This action benefited from the support of the Chair "Challenging Technology for Responsible Energy" led by l’X – Ecole polytechnique and the Fondation de l’Ecole polytechnique, sponsored by TOTAL.

\providecommand{\CH}{{C.-H}}\providecommand{\JB}{{J.-B}}

\bibliographystyle{plainnat}

\begin{thebibliography}{0}
\providecommand{\natexlab}[1]{#1}
\providecommand{\url}[1]{\texttt{#1}}
\expandafter\ifx\csname urlstyle\endcsname\relax
  \providecommand{\doi}[1]{doi: #1}\else
  \providecommand{\doi}{doi: \begingroup \urlstyle{rm}\Url}\fi

\end{thebibliography}


\begin{thebibliography}{38}
\providecommand{\natexlab}[1]{#1}
\providecommand{\url}[1]{\texttt{#1}}
\expandafter\ifx\csname urlstyle\endcsname\relax
  \providecommand{\doi}[1]{doi: #1}\else
  \providecommand{\doi}{doi: \begingroup \urlstyle{rm}\Url}\fi

\bibitem[Boisbunon et~al.(2014{\natexlab{a}})Boisbunon, Flamary, and
  Rakotomamonjy]{boisbunon2014active}
A.~Boisbunon, R.~Flamary, and A.~Rakotomamonjy.
\newblock Active set strategy for high-dimensional non-convex sparse
  optimization problems.
\newblock In \emph{ICASSP}, pages 1517--1521. IEEE, 2014{\natexlab{a}}.

\bibitem[Boisbunon et~al.(2014{\natexlab{b}})Boisbunon, Flamary, Rakotomamonjy,
  Giros, and Zerubia]{boisbunon2014large}
A.~Boisbunon, R.~Flamary, A.~Rakotomamonjy, A.~Giros, and J.~Zerubia.
\newblock Large scale sparse optimization for object detection in high
  resolution images.
\newblock In \emph{IEEE Workshop in Machine Learning for Signal Processing
  (MLSP)}, 2014{\natexlab{b}}.

\bibitem[Breheny and Huang(2011)]{breheny2011coordinate}
P.~Breheny and J.~Huang.
\newblock Coordinate descent algorithms for nonconvex penalized regression,
  with applications to biological feature selection.
\newblock \emph{Ann. Appl. Stat.}, 5\penalty0 (1):\penalty0 232, 2011.

\bibitem[Cand{\`e}s et~al.(2008)Cand{\`e}s, Wakin, and
  Boyd]{candes2008enhancing}
E.~J. Cand{\`e}s, M.~B. Wakin, and S.~P. Boyd.
\newblock Enhancing sparsity by reweighted {$l_1$} minimization.
\newblock \emph{J. Fourier Anal. Applicat.}, 14\penalty0 (5-6):\penalty0
  877--905, 2008.

\bibitem[Chen and Donoho(1994)]{chenbasispursuit}
S.~Chen and D.~Donoho.
\newblock Basis pursuit.
\newblock In IEEE, editor, \emph{Proceedings of 1994 28th Asilomar Conference
  on Signals, Systems and Computers}, 1994.

\bibitem[Chen et~al.(2001)Chen, Donoho, and Saunders]{chen2001atomic}
S.~S. Chen, D.~L. Donoho, and M.~A. Saunders.
\newblock Atomic decomposition by basis pursuit.
\newblock \emph{SIAM review}, 43\penalty0 (1):\penalty0 129--159, 2001.

\bibitem[Combettes and Wajs(2005)]{combettes2005signal}
P.~L. Combettes and V.~R. Wajs.
\newblock Signal recovery by proximal forward-backward splitting.
\newblock \emph{Multiscale Modeling \& Simulation}, 4\penalty0 (4):\penalty0
  1168--1200, 2005.

\bibitem[Donoho(2006)]{donoho2006compressed}
D.~L. Donoho.
\newblock Compressed sensing.
\newblock \emph{{IEEE} Trans. Inf. Theory}, 52\penalty0 (4):\penalty0
  1289--1306, 2006.

\bibitem[Fan and Li(2001)]{fan2001variable}
J.~Fan and R.~Li.
\newblock Variable selection via nonconcave penalized likelihood and its oracle
  properties.
\newblock \emph{J. Amer. Statist. Assoc.}, 96\penalty0 (456):\penalty0
  1348--1360, 2001.

\bibitem[{Flamary} et~al.(2015){Flamary}, {Rakotomamonjy}, and
  {Gasso}]{flamary2015}
R.~{Flamary}, A.~{Rakotomamonjy}, and G.~{Gasso}.
\newblock Importance sampling strategy for non-convex randomized
  block-coordinate descent.
\newblock In \emph{2015 IEEE 6th International Workshop on Computational
  Advances in Multi-Sensor Adaptive Processing (CAMSAP)}, pages 301--304, 2015.

\bibitem[Friedman et~al.(2010)Friedman, Hastie, and
  Tibshirani]{friedman2010regularization}
J.~Friedman, T.~J. Hastie, and R.~Tibshirani.
\newblock Regularization paths for generalized linear models via coordinate
  descent.
\newblock \emph{J. Stat. Softw.}, 33\penalty0 (1):\penalty0 1--22, 2010.

\bibitem[Gasso et~al.(2009)Gasso, Rakotomamonjy, and Canu]{gasso2009recovering}
Gilles Gasso, Alain Rakotomamonjy, and St{\'e}phane Canu.
\newblock Recovering sparse signals with a certain family of nonconvex
  penalties and dc programming.
\newblock \emph{{IEEE} Trans. Signal Process.}, 57\penalty0 (12):\penalty0
  4686--4698, 2009.

\bibitem[Glasmachers and Igel(2006)]{glasmachers2006maximum}
T.~Glasmachers and C.~Igel.
\newblock Maximum-gain working set selection for {SVMs}.
\newblock \emph{Journal of Machine Learning Research}, 7\penalty0
  (Jul):\penalty0 1437--1466, 2006.

\bibitem[Golub et~al.(1999)Golub, Slonim, Tamayo, Huard, Gaasenbeek, Mesirov,
  Coller, Loh, Downing, Caligiuri, et~al.]{golub1999molecular}
Todd~R Golub, Donna~K Slonim, Pablo Tamayo, Christine Huard, Michelle
  Gaasenbeek, Jill~P Mesirov, Hilary Coller, Mignon~L Loh, James~R Downing,
  Mark~A Caligiuri, et~al.
\newblock Molecular classification of cancer: class discovery and class
  prediction by gene expression monitoring.
\newblock \emph{science}, 286\penalty0 (5439):\penalty0 531--537, 1999.

\bibitem[Gong et~al.(2013)Gong, Zhang, Lu, Huang, and Ye]{gong2013general}
P.~Gong, C.~Zhang, Z.~Lu, J.~Huang, and J.~Ye.
\newblock A general iterative shrinkage and thresholding algorithm for
  non-convex regularized optimization problems.
\newblock In \emph{ICML}, pages 37--45, 2013.

\bibitem[Harris et~al.(2020)Harris, Millman, van~der Walt, Gommers, Virtanen,
  Cournapeau, Wieser, Taylor, Berg, Smith, Kern, Picus, Hoyer, van Kerkwijk,
  Brett, Haldane, del R{\'{i}}o, Wiebe, Peterson, G{\'{e}}rard-Marchant,
  Sheppard, Reddy, Weckesser, Abbasi, Gohlke, and Oliphant]{numpy}
Charles~R. Harris, K.~Jarrod Millman, St{\'{e}}fan~J. van~der Walt, Ralf
  Gommers, Pauli Virtanen, David Cournapeau, Eric Wieser, Julian Taylor,
  Sebastian Berg, Nathaniel~J. Smith, Robert Kern, Matti Picus, Stephan Hoyer,
  Marten~H. van Kerkwijk, Matthew Brett, Allan Haldane, Jaime~Fern{\'{a}}ndez
  del R{\'{i}}o, Mark Wiebe, Pearu Peterson, Pierre G{\'{e}}rard-Marchant,
  Kevin Sheppard, Tyler Reddy, Warren Weckesser, Hameer Abbasi, Christoph
  Gohlke, and Travis~E. Oliphant.
\newblock Array programming with {NumPy}.
\newblock \emph{Nature}, 585\penalty0 (7825):\penalty0 357--362, September
  2020.
\newblock \doi{10.1038/s41586-020-2649-2}.
\newblock URL \url{https://doi.org/10.1038/s41586-020-2649-2}.

\bibitem[Hunter and Lange(2004)]{hunter2004tutorial}
David~R Hunter and Kenneth Lange.
\newblock A tutorial on {MM} algorithms.
\newblock \emph{The American Statistician}, 58\penalty0 (1):\penalty0 30--37,
  2004.

\bibitem[Johnson and Guestrin(2015)]{johnson2015blitz}
T.~B. Johnson and C.~Guestrin.
\newblock Blitz: A principled meta-algorithm for scaling sparse optimization.
\newblock In \emph{ICML}, volume~37, pages 1171--1179, 2015.

\bibitem[Kruger(2003)]{kruger2003frechet}
A~Ya Kruger.
\newblock On {Fr{\'e}chet} subdifferentials.
\newblock \emph{Journal of Mathematical Sciences}, 116\penalty0 (3):\penalty0
  3325--3358, 2003.

\bibitem[Lustig et~al.(2008)Lustig, Donoho, Santos, and
  Pauly]{lustig2008compressed}
M.~Lustig, D.~L. Donoho, J.~M. Santos, and J.~M. Pauly.
\newblock Compressed sensing {MRI}.
\newblock \emph{IEEE Signal Processing Magazine}, 25\penalty0 (2):\penalty0
  72--82, 2008.

\bibitem[Massias et~al.(2017)Massias, Gramfort, and
  Salmon]{Massias_Gramfort_Salmon17}
M.~Massias, A.~Gramfort, and J.~Salmon.
\newblock From safe screening rules to working sets for faster lasso-type
  solvers.
\newblock In \emph{NIPS-OPT}, 2017.

\bibitem[Massias et~al.(2018)Massias, Gramfort, and
  Salmon]{pmlr-v80-massias18a}
M.~Massias, A.~Gramfort, and J.~Salmon.
\newblock {Celer: a Fast Solver for the Lasso with Dual Extrapolation}.
\newblock In \emph{ICML}, volume~80, pages 3315--3324, 2018.

\bibitem[Mordukhovich et~al.(2006)Mordukhovich, Nam, and
  Yen]{mordukhovich2006frechet}
B.S. Mordukhovich, N.~M. Nam, and N.~D. Yen.
\newblock {Fr{\'e}chet} subdifferential calculus and optimality conditions in
  nondifferentiable programming.
\newblock \emph{Optimization}, 55\penalty0 (5-6):\penalty0 685--708, 2006.

\bibitem[Pedregosa et~al.(2011)Pedregosa, Varoquaux, Gramfort, Michel, Thirion,
  Grisel, Blondel, Prettenhofer, Weiss, Dubourg, Vanderplas, Passos,
  Cournapeau, Brucher, Perrot, and Duchesnay]{scikit-learn}
F.~Pedregosa, G.~Varoquaux, A.~Gramfort, V.~Michel, B.~Thirion, O.~Grisel,
  M.~Blondel, P.~Prettenhofer, R.~Weiss, V.~Dubourg, J.~Vanderplas, A.~Passos,
  D.~Cournapeau, M.~Brucher, M.~Perrot, and E.~Duchesnay.
\newblock Scikit-learn: Machine learning in {P}ython.
\newblock \emph{Journal of Machine Learning Research}, 12:\penalty0 2825--2830,
  2011.

\bibitem[Rakotomamonjy et~al.(2015)Rakotomamonjy, Flamary, and
  Gasso]{rakotomamonjy2015dc}
A.~Rakotomamonjy, R.~Flamary, and G.~Gasso.
\newblock {DC} proximal {Newton} for nonconvex optimization problems.
\newblock \emph{IEEE transactions on neural networks and learning systems},
  27\penalty0 (3):\penalty0 636--647, 2015.

\bibitem[Rakotomamonjy et~al.(2019)Rakotomamonjy, Gasso, and
  Salmon]{rakotomamonjy19a}
A.~Rakotomamonjy, G.~Gasso, and J.~Salmon.
\newblock Screening rules for lasso with non-convex sparse regularizers.
\newblock In \emph{ICML}, volume~97, pages 5341--5350, 2019.

\bibitem[Rockafellar and Wets(2009)]{rockafellar2009variational}
R.~T. Rockafellar and R.~\JB. Wets.
\newblock \emph{Variational analysis}, volume 317.
\newblock Springer Science \& Business Media, 2009.

\bibitem[Schirotzek(2007)]{schirotzek2007nonsmooth}
Winfried Schirotzek.
\newblock \emph{Nonsmooth analysis}.
\newblock Springer Science \& Business Media, 2007.

\bibitem[Shevade and Keerthi(2003)]{shevade2003simple}
S.~K. Shevade and S.~S. Keerthi.
\newblock A simple and efficient algorithm for gene selection using sparse
  logistic regression.
\newblock \emph{Bioinformatics}, 19\penalty0 (17):\penalty0 2246--2253, 2003.

\bibitem[Soubies et~al.(2017)Soubies, Blanc-F{\'e}raud, and
  Aubert]{Soubies_Blanc-FeraudAubert16}
E.~Soubies, L.~Blanc-F{\'e}raud, and G.~Aubert.
\newblock A unified view of exact continuous penalties for {$\ell_2$-$\ell_0$}
  minimization.
\newblock \emph{SIAM J. Optim.}, 27\penalty0 (3):\penalty0 2034--2060, 2017.

\bibitem[Tibshirani(1996)]{tibshirani1996regression}
R.~Tibshirani.
\newblock Regression shrinkage and selection via the lasso.
\newblock \emph{J. R. Stat. Soc. Ser. B Stat. Methodol.}, 58\penalty0
  (1):\penalty0 267--288, 1996.

\bibitem[Tibshirani et~al.(2012)Tibshirani, Bien, Friedman, Hastie, Simon,
  Taylor, and Tibshirani]{Tibshirani_Bien_Friedman_Hastie_Simon_Tibshirani12}
R.~Tibshirani, J.~Bien, J.~Friedman, T.~J. Hastie, N.~Simon, J.~Taylor, and
  R.~J. Tibshirani.
\newblock Strong rules for discarding predictors in lasso-type problems.
\newblock \emph{J. R. Stat. Soc. Ser. B Stat. Methodol.}, 74\penalty0
  (2):\penalty0 245--266, 2012.

\bibitem[Villa et~al.(2013)Villa, Salzo, Baldassarre, and
  Verri]{villa2013accelerated}
S.~Villa, S.~Salzo, L.~Baldassarre, and A.~Verri.
\newblock Accelerated and inexact forward-backward algorithms.
\newblock \emph{SIAM Journal on Optimization}, 23\penalty0 (3):\penalty0
  1607--1633, 2013.

\bibitem[Vishwanathan et~al.(2003)Vishwanathan, Smola, and
  Murty]{vishwanathan2003simplesvm}
S.~V.~N. Vishwanathan, A.~J. Smola, and M.~N. Murty.
\newblock Simplesvm.
\newblock In \emph{ICML}, pages 760--767, 2003.

\bibitem[Wang et~al.(2019)Wang, Xiu, and Zhou]{wang2019fast}
R.~Wang, N.~Xiu, and S.~Zhou.
\newblock Fast {Newton} method for sparse logistic regression.
\newblock \emph{arXiv preprint arXiv:1901.02768}, 2019.

\bibitem[Ye and Liu(2012)]{ye2012sparse}
J.~Ye and J.~Liu.
\newblock Sparse methods for biomedical data.
\newblock \emph{ACM Sigkdd Explorations Newsletter}, 14\penalty0 (1):\penalty0
  4--15, 2012.

\bibitem[Zhang(2010{\natexlab{a}})]{zhang2010nearly}
\CH. Zhang.
\newblock Nearly unbiased variable selection under minimax concave penalty.
\newblock \emph{Ann. Statist.}, 38\penalty0 (2):\penalty0 894--942,
  2010{\natexlab{a}}.

\bibitem[Zhang(2010{\natexlab{b}})]{zhang2010analysis}
T.~Zhang.
\newblock Analysis of multi-stage convex relaxation for sparse regularization.
\newblock \emph{Journal of Machine Learning Research}, 11\penalty0
  (Mar):\penalty0 1081--1107, 2010{\natexlab{b}}.

\end{thebibliography}

\clearpage

\appendix

\section{\centering Supplementary material \\
Provably Convergent Working Set Algorithm for Non-Convex Regularized
Regression}

\begin{table*}[h]
		\vskip 0.15in
	\begin{center}
		\begin{footnotesize}
						\begin{tabular}{l | l | l}
				\hline
				Penalty & $r_\lambda(|w|)$ &  $\partial r_\lambda(|w|)$ \\
				\hline
				Log sum    & $\lambda \log(1 + |w|/\theta)$ & $ \displaystyle \left\lbrace
				\begin{array}{lll}
				\left[\frac{-\lambda}{\theta}, \frac{\lambda}{\theta} \right] & \text{if} & w = 0  \\
				\left \{\lambda \frac{\sign(w)}{\theta + |w|} \right \} & \text{if} & w \neq 0
				\end{array} \right.$
				\\
								\hline
				MCP & $\left\lbrace
				\begin{array}{lll}
				\lambda |w| - \frac{w^2}{2 \theta} & \text{if} & |w| \leq \lambda \theta \\
				\theta \lambda^2/2 & \text{if} & |w| > \theta \lambda
				\end{array} \right.$
				& $\left\lbrace \begin{array}{lll}
				\left[-\lambda, \lambda \right] & \text{if} & w = 0 \\
				\{\lambda \sign(w) - \frac{w}{\theta}\} & \text{if} & 0 < |w| \leq \lambda \theta \\
				\{0 \} & \text{if} & |w| > \theta \lambda
				\end{array} \right.$
				\\
				\hline

				SCAD & $\left\lbrace
				\begin{array}{lll}
				\lambda |w|  & \text{if} & |w| \leq \lambda  \\
				\frac{- w^2 + 2 \theta \lambda |w| - \lambda^2 }{2 (\theta - 1)} & \text{if} & \lambda < |w| \leq \lambda \theta \\
				\frac{\lambda^2 (1+\theta)}{2} & \text{if} & |w| > \theta \lambda
				\end{array} \right.$
				&  $\left\lbrace \begin{array}{lll}
				\left[-\lambda, \lambda \right] & \text{if} & w = 0 \\
				\{\lambda \sign(w)\} & \text{if} & 0 < |w| \leq \lambda \\
				\left\{\frac{- w +  \theta \lambda \sign(w)  }{\theta - 1}\right\} & \text{if} & 0 < |w| \leq \lambda \theta \\
				\{0 \} & \text{if} & |w| > \theta \lambda
				\end{array} \right.$  \\
				\hline
			\end{tabular}
					\end{footnotesize}
	\end{center}
	\caption{Common non-convex penalties  with their sub-differentials. Here $\lambda > 0$, $\theta > 0$ ($\theta > 1$ for MCP, $\theta > 2$ for SCAD).
	}
	\label{tab:ncvx_pen}
	\vskip -0.1in
\end{table*}

\subsection{Maximum-Violating Optimality Condition Working Set Algorithm}
The maximum-violating constraint algorithm is a simple algorithm that solves at 
each iteration a sub-problem with a subset of variables and then add some others that violate the most the statement $\y - \X \tilde \w_{\cA_{k}}^\star \in \cC_{\bar \cA_{k}}$,
where  ``the most" is evaluated in term of distance to each set $\cC_j$,
with $j \in \bar \cA_{k}$. Hence, at
each iteration, we compute all these distances, sort them in descending order and
add to the current working set, the $n_k$ variables that yield to the largest distances. The algorithm is presented below.

\begin{algorithm}[h]
	\caption{Maximum Violating Constraints Algorithm  }
	\label{alg:maxvc_ncvx}
	\begin{algorithmic}[1]
		\REQUIRE{ $\{\X, \y\}$, $\cA_1$ active set, $n_k$ number of
		variables to add at iteration $k$, initial vector $ \tilde \w_{\cA_{0}}$ }
		\ENSURE{$\tilde \w_{\cA_{k}}$}
		\FOR{$k=1,2,\dots $}
		\STATE $\w_{\cA_{k}} = \displaystyle\argmin_{w \in \cA_k} \frac{1}{2}\|\y - \X_{\cA_k} \w \|_2^2 +  \sum_{j \in \cA_k} r_\lambda(|w_j|)$ \hfill \emph{warm-start solver with $\w_{\cA_{k-1}}$}
		\STATE $\r_k = \y - \X_{\cA_k} \w_{\cA_{k}}$  \hfill \emph{current residual}

		\STATE  $\v$ = argsort $ \text{dist}(\r_{k}, \cC_{j}$) in descending order
		\STATE $\cA_{k+1} = \v[1:n_k]  \cup \cA_k$ \hfill \emph{update working set by adding the $n_k$ most violating variables}
		\ENDFOR
		\STATE Build $\tilde \w_{\cA_{k}}$
	\end{algorithmic}
\end{algorithm}

\subsection{Proof of Proposition \ref{prop:optimality} }
\setcounter{proposition}{0}
\begin{proposition} If $\w_\cA^\star$ satisfies Fermat's condition of Problem \eqref{eq:restrictedprob}, then for all $j \in \cA$, we have
	\begin{equation}\label{eq:proplowerbound}
	|\x_j^\top (\y - \X_{\cA} \w_\cA^\star)| \leq  r_\lambda^\prime(0)
	\end{equation}
	where $r_\lambda^\prime$ is the derivative of $r_\lambda$.
\end{proposition}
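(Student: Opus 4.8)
The plan is to show that, regardless of the value of $w_{j,\cA}^\star$, the Fréchet subdifferential $\partial r_\lambda(|w_{j,\cA}^\star|)$ is always contained in the symmetric interval $[-r_\lambda'(0),r_\lambda'(0)]$; then Fermat's condition for Problem~\eqref{eq:restrictedprob}, namely $\x_j^\top(\y-\X_\cA\w_\cA^\star)\in\partial r_\lambda(|w_{j,\cA}^\star|)$ for $j\in\cA$, immediately yields \eqref{eq:proplowerbound}. I would split the argument into the two cases $w_{j,\cA}^\star\neq 0$ and $w_{j,\cA}^\star=0$.

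For $j\in\cA$ with $w_{j,\cA}^\star\neq 0$, the map $w\mapsto r_\lambda(|w|)$ is differentiable at $w_{j,\cA}^\star$ (it is the composition of the differentiable function $r_\lambda$ with $|\cdot|$, which is smooth away from the origin), so $\partial r_\lambda(|w_{j,\cA}^\star|)=\{\,r_\lambda'(|w_{j,\cA}^\star|)\sign(w_{j,\cA}^\star)\,\}$. Since $r_\lambda$ is non-decreasing and concave on $[0,+\infty)$, its derivative $r_\lambda'$ is non-negative and non-increasing, hence $0\le r_\lambda'(|w_{j,\cA}^\star|)\le r_\lambda'(0)$. Combining this with Fermat's condition gives $|\x_j^\top(\y-\X_\cA\w_\cA^\star)|=r_\lambda'(|w_{j,\cA}^\star|)\le r_\lambda'(0)$.

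For $j\in\cA$ with $w_{j,\cA}^\star=0$, I would compute $\partial r_\lambda(|0|)$ directly from the definition of the Fréchet subdifferential. Using differentiability of $r_\lambda$ at $0$, expand $r_\lambda(|h|)=r_\lambda(0)+r_\lambda'(0)|h|+o(|h|)$ as $h\to 0$; then $v\in\partial r_\lambda(|0|)$ iff $\liminf_{h\to 0}\bigl(r_\lambda'(0)|h|-vh+o(|h|)\bigr)/|h|\ge 0$, which, evaluating the one-sided limits $h\to 0^+$ and $h\to 0^-$ separately (and using $r_\lambda'(0)\ge 0$), is equivalent to $|v|\le r_\lambda'(0)$. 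Hence $\partial r_\lambda(|0|)=[-r_\lambda'(0),r_\lambda'(0)]$, and Fermat's condition again yields the bound. For the concrete penalties MCP, log-sum and SCAD this identification of $\partial r_\lambda(|0|)$ is precisely the first line of the corresponding row of Table~\ref{tab:ncvx_pen}.

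Putting the two cases together establishes \eqref{eq:proplowerbound} for every $j\in\cA$. The only slightly delicate step is the origin case: one must be careful to work with the Fréchet subdifferential (not the limiting one) and to invoke both monotonicity ($r_\lambda'(0)\ge 0$, so the slab is non-degenerate) and differentiability of $r_\lambda$ at $0$; the rest is a routine consequence of the concavity and monotonicity assumptions on $r_\lambda$.
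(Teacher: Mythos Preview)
Your proof is correct and follows essentially the same approach as the paper: split into the cases $w_{j,\cA}^\star\neq 0$ and $w_{j,\cA}^\star=0$, and in both cases use that $r_\lambda'$ is non-negative and non-increasing (from monotonicity and concavity of $r_\lambda$) to bound the subdifferential by $[-r_\lambda'(0),r_\lambda'(0)]$. The only difference is that you derive $\partial r_\lambda(|0|)=[-r_\lambda'(0),r_\lambda'(0)]$ from the definition of the Fr\'echet subdifferential, whereas the paper simply invokes Fermat's condition at zero, having already displayed this computation for the specific penalties in the preceding examples and in Table~\ref{tab:ncvx_pen}.
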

\begin{proof} At first, note that since the function $r_\lambda$ is
{monotone and}
concave, then its derivative is positive and non-increasing. Hence
	$\forall w \geq 0,\,\,r^\prime_\lambda(w) \leq r^\prime_\lambda(0)$. Now,
	for $j \in \{i \in \cA : w_{i,\cA}^\star = 0\}$, the inequality in Equation 	\ref{eq:proplowerbound}
	 naturally comes from Fermat's condition in Equation \ref{eq:fermat}. When $j \in \{i \in \cA : w_{i,\cA}^\star \neq 0\}$,  we have 
	 	 $\x_j^\top \rev{\res(\w_{\cA}^\star)} 	 = r_\lambda^\prime(|w_{j,\cA}^\star|) $. Taking
	the absolute value of this equation and plugging in the inequality of the derivatives concludes the proof.
\end{proof}

\subsection{Proof of Proposition \ref{prop:optalpha}}
\setcounter{proposition}{1}
\begin{proposition} \label{prop:optalpha} Given a working set $\cA_{k}$ and $\w_{\cA_{k}}^\star$ solving the related restricted problem, $\tilde \w_{\cA_{k}}^\star$ is also optimal for the full problem if and only if $\alpha=1$ (which also means $\s_{k+1} = \r_k$).
\end{proposition}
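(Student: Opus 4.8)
The plan is to reduce the claimed equivalence to a one-line statement about the location of the residual $\r_k$, by combining three ingredients already available: the necessary optimality condition of Proposition~\ref{prop:optimality}, the restricted-to-full characterisation in Equation~\eqref{eq:restrictedtofull}, and the convexity of the feasible set $\cC$. Throughout, ``optimal for the full problem'' is understood (as in the rest of the paper) as satisfying Fermat's stationarity condition~\eqref{eq:fermat} for Problem~\eqref{eq:generalprob}.

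First I would record the structural facts. Each $\cC_j=\{\a:h_j(\a)\le 0\}$ is closed and convex since $h_j$ is continuous and convex, hence so is $\cC=\bigcap_{j=1}^d\cC_j$. By construction every iterate $\s_k$ lies in $\cC$: this holds for $\s_1$ by assumption, and inductively $\s_{k+1}=\alpha_k\r_k+(1-\alpha_k)\s_k\in\cC$ by the very definition of $\alpha_k$ at Line~4 of Algorithm~\ref{alg:activeset_ncvx}. Consequently the set $\{\alpha\in[0,1]:\alpha\r_k+(1-\alpha)\s_k\in\cC\}$ is a nonempty (it contains $0$) closed subinterval of $[0,1]$, so $\alpha_k$ is well defined, and by convexity of $\cC$ one has $\alpha_k=1$ \emph{if and only if} $\r_k\in\cC$; in that event $\s_{k+1}=\r_k$ follows immediately from Line~5.

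Next I would exploit that $\w_{\cA_k}^\star$ solves the restricted problem~\eqref{eq:restrictedprob}. Proposition~\ref{prop:optimality} then gives $|\x_j^\top\r_k|\le r_\lambda^\prime(0)$ for every $j\in\cA_k$, i.e.\ $\r_k\in\cC_{\cA_k}$. Since $\cC=\cC_{\cA_k}\cap\cC_{\bar\cA_k}$, this shows $\r_k\in\cC\iff\r_k\in\cC_{\bar\cA_k}$. On the other hand Equation~\eqref{eq:restrictedtofull} states precisely that $\tilde\w_{\cA_k}^\star$ satisfies Fermat's condition of the full problem iff $\r_k=\y-\X\tilde\w_{\cA_k}^\star\in\cC_{\bar\cA_k}$ (the indices $j\in\cA_k$ being already handled by optimality of $\w_{\cA_k}^\star$ on the restricted problem). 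Chaining these equivalences,
\[
\tilde\w_{\cA_k}^\star \text{ stationary for }\eqref{eq:generalprob}\iff \r_k\in\cC_{\bar\cA_k}\iff \r_k\in\cC\iff \alpha_k=1,
\]
and $\alpha_k=1$ forces $\s_{k+1}=\r_k$, which is exactly the assertion. I do not anticipate a real obstacle; the only point that needs care is the (routine) induction that $\s_k\in\cC$ for all $k$, without which the step ``$\alpha_k=1\iff\r_k\in\cC$'' could fail, since $\alpha_k$ is a maximum along the segment $[\s_k,\r_k]$ rather than a property of $\r_k$ alone.
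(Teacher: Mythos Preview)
Your argument is correct and follows essentially the same route as the paper's own proof: both directions hinge on the equivalence $\alpha_k=1\iff\r_k\in\cC$ together with the fact that, for a solution of the restricted problem, membership $\r_k\in\cC$ reduces to $\r_k\in\cC_{\bar\cA_k}$, which is precisely Equation~\eqref{eq:restrictedtofull}. Your version is in fact slightly more explicit than the paper's, since you spell out the induction $\s_k\in\cC$ and the convexity step needed for the implication $\r_k\in\cC\Rightarrow\alpha_k=1$, points the paper leaves implicit.
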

\begin{proof}
Assume that $\w_{\cA_{k}}^\star$ and $\tilde \w_{\cA_{k}}^\star$	
are optimal respectively for the restricted and the full problem.	Let us show that in this case $\alpha_k=1$. 
Since $\tilde \w_{\cA_{k}}^\star$ is optimal for the full problem, we thus have $\forall j \in \bar{\cA}_k,\, |\x_j^\top(\y-\X\tilde\w_{\cA_k}^\star)| \leq r_\lambda^\prime(0)$.
And thus we have the following equivalent statement
$$
\y - \X \tilde\w_{\cA_k}^\star \in \cC \Leftrightarrow
\y - \X_{\cA_{k}} \w_{\cA_k}^\star \in \cC \Leftrightarrow \r_{k} \in \cC
$$ 
and thus $\alpha_{k}=1$.

Now assume that $\alpha_{k}=1$ and let us show that $\tilde \w_{\cA_{k}}^\star$ is optimal for the full problem. Since $\alpha_k=1$,
we have $\s_{k+1}=\r_{k}$ and thus $\r_k \in \cC$.
The latter means that $\forall j \in \bar{\cA}_k,\, |\x_j^\top(\y-\X_{\cA_k}\w_{\cA_k}^\star)| \leq r_\lambda^\prime(0)$
and thus $\forall j \in \bar{\cA}_k,\, |\x_j^\top(\y-\X\tilde\w_{\cA_k}^\star)| \leq r_\lambda^\prime(0)$.
Given this last property and the definition of $\tilde \w_{\cA_{k}^\star}$ based on $\w_{\cA_{k}}^\star$, we can conclude
that $\tilde \w_{\cA_{k}}^\star$ is optimal for the full problem.
\end{proof}

\subsection{Proof of Lemma \ref{prop:dist}}

The proof follows similar steps as those given by \citet{johnson2015blitz}.

\setcounter{lemma}{0}

\begin{lemma} At step $k \geq 2$, consider a constraint $\cC_{j}$ such that
	$h_j(\r_k) >0$ and $h_j(\s_k) <0$ then
	\begin{align}
	\text{dist}(\r_k, \cC_{j}) \geq \frac{1 - \alpha_k}{\alpha_k} \tau_{k-1} \enspace.
	\end{align}
\end{lemma}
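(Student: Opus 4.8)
The plan is to reduce the statement to the one–dimensional geometry of the slab $\cC_j$, in the spirit of \citet{johnson2015blitz}. Since $\cC_j$ is the region between the parallel hyperplanes $\{\a:\x_j^\top\a=\pm r_\lambda^\prime(0)\}$, distances to $\cC_j$ and to $\cC_j^{=}$ are realized by moving along $\x_j$, so the two hypotheses give closed forms: from $h_j(\r_k)>0$, i.e. $|\x_j^\top\r_k|>r_\lambda^\prime(0)$, one has $\text{dist}(\r_k,\cC_j)=(|\x_j^\top\r_k|-r_\lambda^\prime(0))/\|\x_j\|_2$ (project $\r_k$ orthogonally onto the nearer face), and from $h_j(\s_k)<0$, i.e. $|\x_j^\top\s_k|<r_\lambda^\prime(0)$, one has $\text{dist}_S(\s_k,\cC_j^{=})=(r_\lambda^\prime(0)-|\x_j^\top\s_k|)/\|\x_j\|_2$ (distance to the nearer of the two bounding hyperplanes).

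Next I would show $j\notin\cA_k$, hence $\text{dist}_S(\s_k,\cC_j^{=})>\tau_{k-1}$. Indeed $\w_{\cA_k}^\star$ solves the restricted problem over $\cA_k$, so Proposition~\ref{prop:optimality} gives $h_i(\r_k)\le 0$ for every $i\in\cA_k$; since $h_j(\r_k)>0$, $j$ cannot be in $\cA_k$. But the update rule at iteration $k-1$ (Line~8) placed into $\cA_k$ every index $i$ with $\text{dist}_S(\s_k,\cC_i^{=})\le\tau_{k-1}$, so $j\notin\cA_k$ forces $\text{dist}_S(\s_k,\cC_j^{=})>\tau_{k-1}$.

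Then I would extract $\alpha_k$ from the identity $\s_{k+1}=\alpha_k\r_k+(1-\alpha_k)\s_k$. Put $\sigma=\sign(\x_j^\top\r_k)$, so $\sigma\x_j^\top\r_k=|\x_j^\top\r_k|>r_\lambda^\prime(0)$ while $|\sigma\x_j^\top\s_k|<r_\lambda^\prime(0)$. The map $\alpha\mapsto\sigma\x_j^\top(\alpha\r_k+(1-\alpha)\s_k)$ is affine and strictly increasing, starting inside $(-r_\lambda^\prime(0),r_\lambda^\prime(0))$ at $\alpha=0$, so the segment leaves $\cC_j$ exactly when this quantity reaches $+r_\lambda^\prime(0)$. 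When $\cC_j$ is the constraint active at Line~4 of Algorithm~\ref{alg:activeset_ncvx} — the situation covered by the standing hypothesis on $h_j$ in Theorem~\ref{prop:convergence} — this exit occurs precisely at $\alpha=\alpha_k$, i.e. $\sigma\x_j^\top\s_{k+1}=r_\lambda^\prime(0)$; solving $\alpha_k\,\sigma\x_j^\top\r_k+(1-\alpha_k)\,\sigma\x_j^\top\s_k=r_\lambda^\prime(0)$ yields
\begin{align*}
\frac{1-\alpha_k}{\alpha_k}=\frac{|\x_j^\top\r_k|-r_\lambda^\prime(0)}{r_\lambda^\prime(0)-\sigma\x_j^\top\s_k}.
\end{align*}
Finally, since $\sigma\x_j^\top\s_k\le|\x_j^\top\s_k|$, the denominator satisfies $r_\lambda^\prime(0)-\sigma\x_j^\top\s_k\ge r_\lambda^\prime(0)-|\x_j^\top\s_k|=\|\x_j\|_2\,\text{dist}_S(\s_k,\cC_j^{=})>\|\x_j\|_2\,\tau_{k-1}$, so $\text{dist}(\r_k,\cC_j)=\tfrac{1-\alpha_k}{\alpha_k}\cdot\tfrac{r_\lambda^\prime(0)-\sigma\x_j^\top\s_k}{\|\x_j\|_2}\ge\tfrac{1-\alpha_k}{\alpha_k}\,\tau_{k-1}$, which is the claim.

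I expect the only delicate point to be the third step. One must do the sign bookkeeping that identifies which face of the slab $[\s_k,\r_k]$ crosses (handled by $\sigma$), but more importantly the clean identity for $\tfrac{1-\alpha_k}{\alpha_k}$ requires $\alpha_k$ to be exactly the exit parameter of $[\s_k,\r_k]$ from $\cC_j$. This holds for the constraint that is binding at Line~4 — which is precisely the one relevant to the convergence proof, since it is the index with $\text{dist}_S(\s_{k+1},\cC_j^{=})=0$ that is re-entered into the working set at iteration~$k$. For a merely violated but non-binding $\cC_j$ one only gets $\alpha_k$ bounded above by that exit parameter, hence a bound on $\tfrac{1-\alpha_k}{\alpha_k}$ in the unhelpful direction, so the argument must be phrased for (and is only needed for) the binding constraint.
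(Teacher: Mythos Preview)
Your proof is correct and follows the same overall plan as the paper, but via a more elementary execution. Both arguments rest on the same three ingredients: (i) $j\notin\cA_k$, hence $\text{dist}_S(\s_k,\cC_j^{=})\geq\tau_{k-1}$; (ii) the binding-constraint condition $h_j(\s_{k+1})=0$; and (iii) the affine relation $\s_{k+1}=\alpha_k\r_k+(1-\alpha_k)\s_k$. Where you project onto the $\x_j$-axis and solve the resulting scalar equation explicitly for $\tfrac{1-\alpha_k}{\alpha_k}$, the paper stays at the vector level: it writes $\text{dist}(\r_k,\cC_j)=\|\z_k-\r_k\|$ with $\z_k$ the projection of $\r_k$ onto $\cC_j^{=}$, substitutes $\r_k=\tfrac{1}{\alpha_k}\s_{k+1}-\tfrac{1-\alpha_k}{\alpha_k}\s_k$, factors out $\tfrac{1-\alpha_k}{\alpha_k}$, and then argues that the residual point $-\tfrac{\alpha_k}{1-\alpha_k}\z_k+\tfrac{1}{1-\alpha_k}\s_{k+1}$ lies in $\{h_j\geq 0\}$ (as a non-convex combination of two points on $\cC_j^{=}$), so its distance to $\s_k$ is at least $\tau_{k-1}$. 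Your one-dimensional reduction is more transparent and handles the sign bookkeeping cleanly via $\sigma$; the paper's norm manipulation is shorter but less explicit. You are also right that the argument only works for the binding constraint: the paper makes precisely the same tacit restriction when it asserts $h_j(\s_{k+1})=0$, and as you note, this is the only case needed for Theorem~\ref{prop:convergence}.
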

\begin{proof}
Denote as $j$ the index of the function $h_j$ such that $h_j(\r_k)>0$ and $h_j(\s_k) <0$.
Let's $\z_k \in \{\z \in \R^n: h_j(\z)= 0\}.$ The following equality holds
\begin{align}
 \text{dist}(\r_k, \cC_{j}) &= \|\z_k - \r_{k}\|_2 \nonumber\\
 &= \|\z_k - \frac{1}{\alpha_{k}}( \s_{k+1} -{(1-\alpha_{k})}{} \s_{k})\| \nonumber\\
 &= \left \|\z_k - \frac{1}{\alpha_{k}} \s_{k+1} +\frac{1-\alpha_{k}}{
	\alpha_k} \s_{k}\right \| \nonumber\\
 &= \left \|-\z_k + \frac{1}{\alpha_{k}} \s_{k+1} -\frac{1-\alpha_{k}}{
	\alpha_k} \s_{k}\right \| \nonumber\\
 &= \frac{1-\alpha_{k}}{
 	\alpha_k}\left \|-\frac{\alpha_{k}}{1-
 	\alpha_k}\z_k + \frac{1}{1-\alpha_{k}} \s_{k+1} - \s_{k} \right \| \label{prop:eqa}
\end{align}
Note that because $h_j(\r_k)>0$ and $h_j(\s_k) <0$, $\alpha_k \neq 0$ since $h_j$ is a continuous function. 
By construction,  
we have $h_j(\z_k)=0$  as $\z_k$ is a minimizer of the distance and $h_j(\s_{k+1})=0$ as we have chosen $j$ as the index of the set  
that makes $\s_{k+1} \not \in \cC$. Since $h_j(\cdot) \leq 0$ is a convex set and the coefficients $-\frac{\alpha_k}{1-\alpha_k}$ and $\frac{1}{1-\alpha_{k}}$  do not lead to a convex combination of $\z_k$ and $\s_{k+1}$ and hence, we have 
$h_j(-\frac{\alpha_{k}}{1-
	\alpha_k}\z_k + \frac{1}{1-\alpha_{k}} \s_{k+1}) \geq 0$. On the other hand
by construction, we have $\s_{k} \in \cC_{j}$. Furthermore, we have
$\text{dist}_S(\s_{k},\cC_{j}^=) \geq \tau_{k-1}$. Indeed, since $h_j(\r_{k}) >0$,
we have $j \not \in \cA_{k}$ as by construction $\r_k \in \cC_{\cA_k}$ ($\w_{\cA_{k}}$ has been optimized over $\cA_{k}$). Because $j \not \in \cA_{k}$
means that $\text{dist}_S(\s_{k},\cC_{j}^=) \geq \tau_{k-1}$, by definition of the construction of $\cA_{k}$ in Algorithm \ref{alg:activeset_ncvx}.

Now as $h_j(-\frac{\alpha_{k}}{1-
	\alpha_k}\z_k + \frac{1}{1-\alpha_{k}} \s_{k+1}) \geq 0$ and  $\text{dist}_S(\s_{k},\cC_{j}^=) \geq \tau_{k-1}$, the norm in the above equation \eqref{prop:eqa} is lower bounded by $\tau_k$ and we have
$$
 \text{dist}(\r_k, \cC_{j}) \geq \frac{1-\alpha_{k}}{
 	\alpha_k} \tau_{k-1}.
$$
\end{proof}

\subsection{Proof of Theorem \ref{prop:convergence}}

\setcounter{theorem}{0}
\begin{theorem} Suppose that for each step $k$, the algorithm solving the inner problem ensures	a decrease in the objective value in the form
	\begin{align*}
	f(\tilde \w_{\cA_{k+1}}^\star) - f(\tilde \w_{\cA_k}^\star) \leq - \gamma_k \|\tilde \w_{\cA_{k+1}}^\star - \tilde \w_{\cA_k}^\star\|_2^2 \enspace.
	\end{align*}
	with $\forall k, \,\gamma_k \geq \underline{\gamma} > 0$. For the inner solver, we also impose
	that when solving the problem with set $\cA_{k+1}$, the inner solver is warm-started with $\w_{\cA_{k}}^\star$. Assume also that
	$\|\X\|_2 > 0$,  $\tau_k \geq \underline{\tau} >0$  and $h_j$ satisfies assumption in Lemma \ref{prop:dist}, then the sequence of $\alpha_k$ produced by Algorithm \ref{alg:activeset_ncvx} converges towards $1$ and $\forall j,\,\,\lim_{k \rightarrow \infty} |\x_j^\top \r_{k}| \leq r^\prime_\lambda(0)$.
\end{theorem}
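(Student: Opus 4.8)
The plan is to combine the sufficient-decrease hypothesis with Propositions~\ref{prop:w} and~\ref{prop:optalpha} and Lemma~\ref{prop:dist}. First I would observe that, since the inner solver for $\cA_{k+1}$ is warm-started at $\w_{\cA_k}^\star$ and the indices pruned from $\cA_k$ carry zero weight (so that embedding $\tilde\w_{\cA_k}^\star$ into $\R^{|\cA_{k+1}|}$ leaves the objective $f$ unchanged), the assumed decrease makes $\{f(\tilde\w_{\cA_k}^\star)\}_k$ non-increasing; as $f\ge 0$ it converges, hence $f(\tilde\w_{\cA_k}^\star)-f(\tilde\w_{\cA_{k+1}}^\star)\to 0$. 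Plugging this into the decrease inequality and using $\gamma_k\ge\underline\gamma>0$ gives
\begin{equation*}
\|\tilde\w_{\cA_{k+1}}^\star - \tilde\w_{\cA_k}^\star\|_2^2 \ \le\ \tfrac{1}{\underline\gamma}\bigl(f(\tilde\w_{\cA_k}^\star) - f(\tilde\w_{\cA_{k+1}}^\star)\bigr)\ \longrightarrow\ 0 \quad\text{as } k\to\infty .
\end{equation*}

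Next I would turn this into a statement on $\alpha_k$. If the algorithm has not already stopped at step $k$ (otherwise the conclusion is immediate by Proposition~\ref{prop:optalpha}), then $\r_k\notin\cC$, and the index $j^\star$ that is binding in Line~4 satisfies $h_{j^\star}(\r_k)>0$, $h_{j^\star}(\s_{k+1})=0$ and $\text{dist}_S(\s_{k+1},\cC_{j^\star}^{=})=0\le\tau_k$; hence $j^\star$ belongs to the set of features added to form $\cA_{k+1}$ and $\text{dist}(\r_k,\cC_{j^\star})>0$. Applying Lemma~\ref{prop:dist} to $j^\star$ and then Proposition~\ref{prop:w} yields
\begin{equation*}
\frac{1-\alpha_k}{\alpha_k}\,\underline\tau \ \le\ \frac{1-\alpha_k}{\alpha_k}\,\tau_{k-1}\ \le\ \text{dist}(\r_k,\cC_{j^\star}) \ \le\ \|\X\|_2\,\|\tilde\w_{\cA_{k+1}}^\star - \tilde\w_{\cA_k}^\star\|_2 .
\end{equation*}
Combining with the previous display and $\underline\tau>0$, $\|\X\|_2>0$ forces $\tfrac{1-\alpha_k}{\alpha_k}\to 0$, i.e.\ $\alpha_k\to 1$ (every subsequential limit of $\alpha_k\in[0,1]$ must be $1$), which is the first assertion.

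For the second assertion, I would use that by construction $\s_k\in\cC$ for every $k$ (it holds for $\s_1$ by initialization and for $\s_{k+1}$ by the definition of $\alpha_k$ in Line~4), hence $|\x_j^\top\s_k|\le r_\lambda^\prime(0)$ for all $j$ and $k$. From Line~5, $\r_k=\tfrac{1}{\alpha_k}\s_{k+1}-\tfrac{1-\alpha_k}{\alpha_k}\s_k$, so the triangle inequality gives $|\x_j^\top\r_k|\le\tfrac{2-\alpha_k}{\alpha_k}\,r_\lambda^\prime(0)$ for every $j$; letting $k\to\infty$ and using $\alpha_k\to 1$ yields $\limsup_{k\to\infty}|\x_j^\top\r_k|\le r_\lambda^\prime(0)$, as claimed.

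The step I expect to be the main obstacle is the second paragraph: one must guarantee that at every non-terminal iteration there is a feature $j^\star$ that is simultaneously (i) added to the working set, (ii) strictly violated by $\r_k$ (so that $\text{dist}(\r_k,\cC_{j^\star})>0$ and Proposition~\ref{prop:w} applies), and (iii) admissible for Lemma~\ref{prop:dist}, in particular satisfying the strict inequality $h_{j^\star}(\s_k)<0$ (which also rules out the degenerate case $\alpha_k=0$) --- this is exactly where the hypothesis ``$h_j$ satisfies the assumption in Lemma~\ref{prop:dist}'' is invoked --- while keeping the pruning/warm-start bookkeeping consistent so that the telescoping of $f(\tilde\w_{\cA_k}^\star)-f(\tilde\w_{\cA_{k+1}}^\star)$ over $k$ is legitimate. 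The remaining arguments are routine chaining of the already-established inequalities.
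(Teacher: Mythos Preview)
Your proposal is correct and follows essentially the same route as the paper: chain the sufficient-decrease hypothesis with Proposition~\ref{prop:w} and Lemma~\ref{prop:dist} to force $\alpha_k\to 1$, then use $\r_k=\tfrac{1}{\alpha_k}\s_{k+1}-\tfrac{1-\alpha_k}{\alpha_k}\s_k$ together with $\s_k,\s_{k+1}\in\cC$ to conclude. The only cosmetic difference is that the paper telescopes to obtain a summable series $\sum_\ell \gamma_\ell\bigl(\tfrac{1-\alpha_\ell}{\alpha_\ell}\bigr)^2\tau_{\ell-1}^2<\infty$ and infers that the general term vanishes, whereas you go directly from convergence of $\{f(\tilde\w_{\cA_k}^\star)\}$ to $\|\tilde\w_{\cA_{k+1}}^\star-\tilde\w_{\cA_k}^\star\|_2\to 0$; your version is in fact a bit more explicit about identifying the binding index $j^\star$ that simultaneously satisfies the hypotheses of Proposition~\ref{prop:w} and Lemma~\ref{prop:dist}, which the paper leaves implicit.
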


\begin{proof} Before going into details, note that pruning  $\w_{\cA_{k}}^\star$ before warm-starting does not affect $f(\tilde \w_{\cA_k}^\star)$, and thus the proof still holds for that situation.	Using results in Proposition \ref{prop:w} and
	Lemma \ref{prop:dist} and the above assumption, we have, for $k \geq 2$,
	\begin{align}
	f(\tilde \w_{\cA_{k+1}}^\star) &\leq f( \tilde \w_{\cA_k}^\star) - \frac{\gamma_k}{\|\X\|_2^2} \left(\frac{1 - \alpha_k}{\alpha_k} \right)^2 \tau_{k-1}^2 \nonumber\\
	&\leq f(\tilde \w_{\cA_2}^\star) - \frac{1}{\|\X\|_2^2}\sum_{\ell=2}^k \gamma_{\ell} \left(\frac{1 - \alpha_{\ell}}{\alpha_{\ell}} \right)^2 \tau_{\ell-1}^2. \nonumber
	\end{align}
	This means that $\frac{1}{\|\X\|_2^2}\sum_{\ell=2}^k \gamma_{\ell} \left(\frac{1 - \alpha_{\ell}}{\alpha_{\ell}} \right)^2 \tau_{\ell-1}^2 \leq f(\tilde \w_{\cA_2}^\star) - f( \tilde \w_{\cA_{k+1}}^\star)$.
	Since $f$ is bounded from below, the right hand side is less than some
	positive constant, hence  $ \sum_{{\ell}=2}^\infty \gamma_j \left(\frac{1 - \alpha_{\ell}}{\alpha_{\ell}} \right)^2 \tau_{\ell-1}^2 < \infty$. Since the latter sum is bounded, it implies that $\gamma_{\ell} \left(\frac{1 - \alpha_{\ell}}{\alpha_{\ell}} \right)^2 \tau_{\ell-1}^2 \rightarrow 0$ as ${\ell} \rightarrow \infty$, and as $\gamma_{\ell} \geq  \underline{\gamma} > 0 $,  $\tau_{\ell} \geq \underline{\tau} > 0$, we have $\lim_{{\ell} \rightarrow \infty} \alpha_{\ell} = 1$.
	Now using the definition of $\s_{k+1}$, we have  $\forall j,\,\x_j^\top \r_k = \frac{1}{\alpha_{k}} \x_j^\top\s_{k+1} - \frac{1 - \alpha_{k}}{\alpha_{k}} \x_j^\top\s_k $. Then, taking the absolute value,  triangle inequality, using the fact that $\forall k,\,\,\s_{k} \in \cC$ and taking the limit concludes the proof.
\end{proof}

\subsection{Experimental analysis}

\subsubsection{Data}

The toy dataset has been built from scratch and can be reproduced from the companion code of the paper.

The Leukemia dataset we have used is available at 
\url{https://web.stanford.edu/~hastie/CASI_files/DATA/leukemia.html}

The Newsgroup dataset is part of the Sklearn dataset package. The 3 categories is composed of the topic :         \textit{talk.religion.misc}, \textit{comp.graphics} and \textit{alt.atheism}. The 5 categories is composed by 
\textit{comp.graphics}, \textit{comp.os.ms-windows.misc}  \textit{comp.sys.ibm.pc.hardware}
\textit{comp.sys.mac.hardware}, \textit{comp.windows.x}.
We have used the natural default train split as proposed by sklearn \cite{scikit-learn} and the features are based on TF-IDF representation (using the tfidf function of sklearn)
keeping  default parameters.

\subsubsection{Comparing on other metrics}

The main contribution of our work is to propose a working set
algorithm for sparse non-convex regression problem with theoretical guarantees of convergence. We have shown that the main benefit of this algorithm is its computational efficiency.
 
We report below some results  on other
metrics.  We want to show  that there is no
approach outperforming the others. For the Large toy problem, we report the objective value  (white background, top) and support recovery F-measure (in percent) (blue
background, middle). For the Leukemia dataset,  once   feature selection has been performed, 
we report the classification accuracy  in percent,  (averaged over 5 trials ) of a linear SVM trained on the non-zero features of a part of the dataset (50/22 sample splits). Remind that for Leukemia, there is a computational gain of more than $30$ between GIST and Fireworks GIST.
\begin{table*}[h]
			\resizebox{\linewidth}{!}{
		\begin{tabular}{l||rrrr||rrrr}
			\hline
			Data - tol - $K$ & MM prox & GIST    & MaxVC Gist  & FireWorks  Gist   & MM BCD &  BCD & MaxVC BCD & FireWorks BCD	 \\ \hline
			
			Toy large - 1.00e-03 - 0.07 & \textbf{75.8$\pm$4.8} & 76.5$\pm$8.4 & 76.5$\pm$8.4 & 76.5$\pm$8.6 & \textbf{75.6$\pm$0.0} & 76.5$\pm$8.5 & 76.5$\pm$8.4 & 76.5$\pm$8.6\\
			Toy large - 1.00e-05 - 0.07 & - & \textbf{76.5$\pm$8.4} & \textbf{76.5$\pm$8.5} & \textbf{76.5$\pm$8.6} & \textbf{75.6$\pm$0.0} & 76.5$\pm$8.4 & 76.5$\pm$8.5 & 76.5$\pm$8.6\\
			
			Toy large - 1.00e-03 - 0.01 & \textbf{11.5$\pm$0.9} & \textbf{11.5$\pm$1.4} & 11.6$\pm$1.4 & \textbf{11.5$\pm$1.4} & \textbf{11.5$\pm$0.0} & \textbf{11.5$\pm$1.4} & 11.6$\pm$1.4 & \textbf{11.5$\pm$1.4}\\
			Toy large - 1.00e-05 - 0.01 & - & \textbf{11.5$\pm$1.4} & \textbf{11.5$\pm$1.4} & \textbf{11.5$\pm$1.4} & \textbf{11.5$\pm$0.0} & \textbf{11.5$\pm$1.4} & \textbf{11.5$\pm$1.4} & \textbf{11.5$\pm$1.4}\\\hline\hline
			\rowcolor{LightCyan}
			Toy large - 1.00e-03 - 0.07 & 43.6$\pm$2.9 & \textbf{44.4$\pm$2.9} & 43.7$\pm$3.9 &  {44.2$\pm$3.5} & 43.1$\pm$0.0 & {44.2$\pm$2.7} & 43.6$\pm$3.4 & \textbf{44.4$\pm$3.6}\\ 			\rowcolor{LightCyan}
			Toy large - 1.00e-05 - 0.07 & - & \textbf{44.4$\pm$2.9} & 42.8$\pm$4.2 & {43.9$\pm$3.2} & 43.6$\pm$0.0 & \textbf{43.9$\pm$2.6} & 42.8$\pm$4.2 & \textbf{43.9$\pm$3.2}\\ 			\rowcolor{LightCyan}
			Toy large - 1.00e-03 - 0.01 & 39.1$\pm$2.3 & 39.1$\pm$1.1 & 38.3$\pm$1.7 & \textbf{39.3$\pm$1.3} & 37.4$\pm$0.0 & 38.4$\pm$1.9 & 38.4$\pm$1.9 & \textbf{39.4$\pm$1.2}\\ 			\rowcolor{LightCyan}
			Toy large - 1.00e-05 - 0.01 & - & 39.4$\pm$1.7 & 39.2$\pm$1.5 & \textbf{39.8$\pm$1.7} & 38.9$\pm$0.0 & 38.7$\pm$1.7 & 39.0$\pm$1.2 & \textbf{39.1$\pm$2.1}\\\hline \hline
			\rowcolor{LightGreen}
			Leukemia - 1.00e-03 - 0.07 & 90.00$\pm$5.3 & \textbf{91.82$\pm$3.4} & 90.00$\pm$5.3 & 90.91$\pm$6.4 & 90.00$\pm$5.3 & 88.18$\pm$6.2 & \textbf{90.91$\pm$6.4} & \textbf{90.91$\pm$6.4}\\ 			 \rowcolor{LightGreen}
			Leukemia - 1.00e-05 - 0.07 & 86.36$\pm$6.4 & \textbf{91.82$\pm$3.4} & 89.09$\pm$4.6 & \textbf{91.82$\pm$5.3} & 87.27$\pm$6.0 & 89.09$\pm$6.8 & \textbf{90.91$\pm$6.4} & 90.00$\pm$7.8\\ 			 \rowcolor{LightGreen}
			
			Leukemia - 1.00e-03 - 0.01 & 95.45$\pm$4.1 & \textbf{96.36$\pm$3.4} & 95.45$\pm$2.9 & 95.45$\pm$4.1 & 95.45$\pm$4.1 & 92.73$\pm$4.6 & 92.73$\pm$4.6 & \textbf{97.27$\pm$2.2}\\ 			 \rowcolor{LightGreen}
			Leukemia - 1.00e-05 - 0.01 & \textbf{96.59$\pm$3.8} & 96.36$\pm$3.4 & 94.55$\pm$3.4 & 93.64$\pm$2.2 & \textbf{95.45$\pm$4.1} & 92.73$\pm$5.5 & 94.55$\pm$3.4 & 93.64$\pm$2.2\\\hline

		\end{tabular}
	}
\end{table*}

\subsubsection{On the effect of the number of features to add}

In working set algorithms, the number of features to add  $n_{added}$ to the working set at each iteration can be considered as an hyperparameter. Usually, one adds one feature at each iteration but it is not clear whether it is an optimal choice. In the results we reported in Table \ref{table:toy}, for the toy problems we fixed  $n_{added}=30$.  We report in Figure \ref{fig:feature} the running time (averaged over $5$ runs)  we obtain for the Large toy problem (which has $5000$ features and $500$ informative ones), 
 with respects to  that parameter $n_{added}$.  Note that we have reported the performance of MaxVC, a version of MaxVC with pruning (feature with zero weights are removed from $\mathcal{A}_k)$ and our FireWorks using a BCD algorithm as an inner solver. .

We remark that for most configurations, adding $1$ feature at a time
is not optimal and a better choice is to add between 20 to 40 features at a time. When comparing the performance of the different algorithms, as we anticipated, FireWorks is mostly as efficient as MaxVC and its variants.
However, we want to emphasize again that MaxVC and its variants are  algorithms without convergence proofs, and thus we believe
that FireWorks achieves the best compromise between theoretical supported and practical efficiency.

\begin{figure*}
		\begin{center}
	\includegraphics[width=8cm]{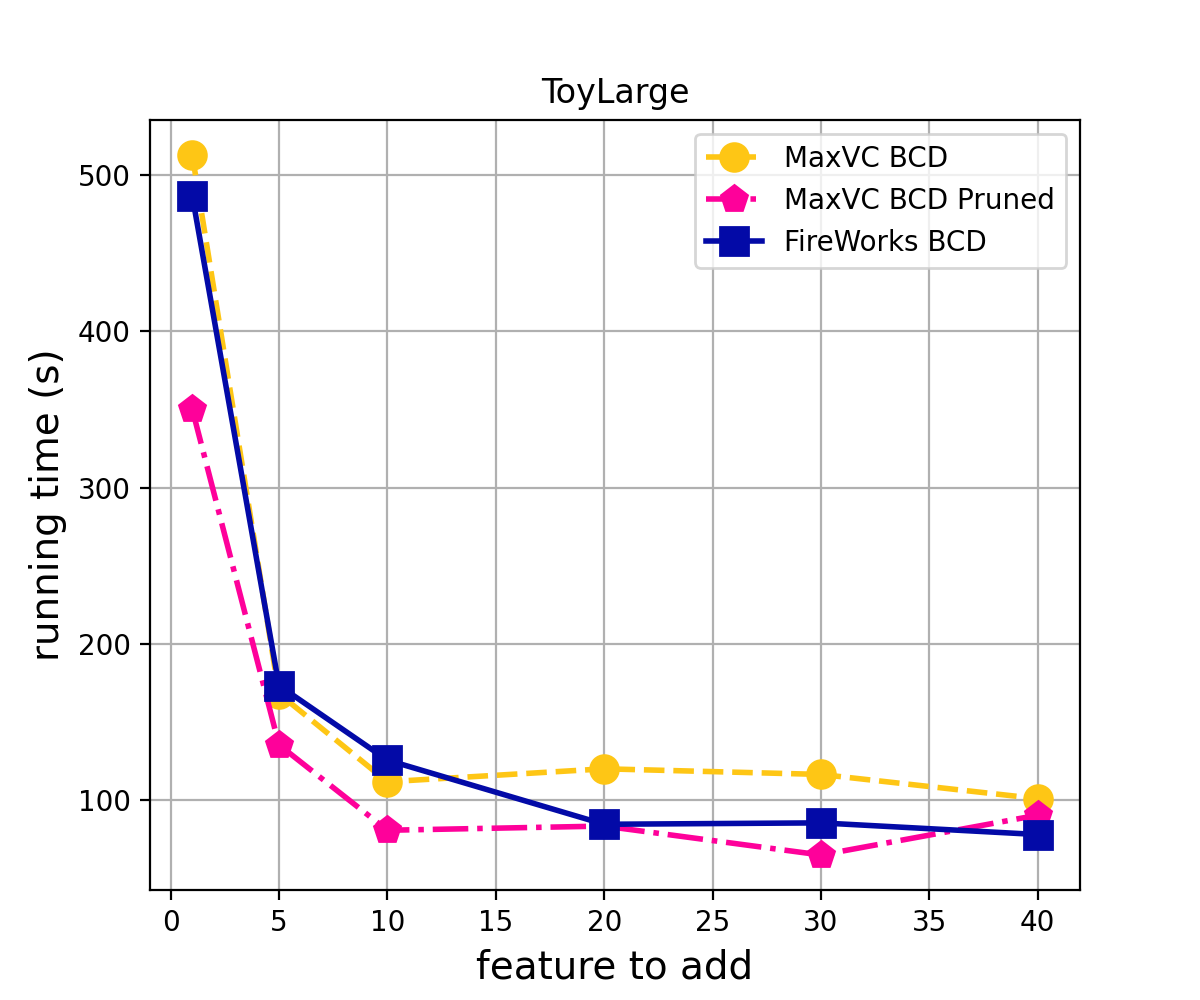}
	\includegraphics[width=8cm]{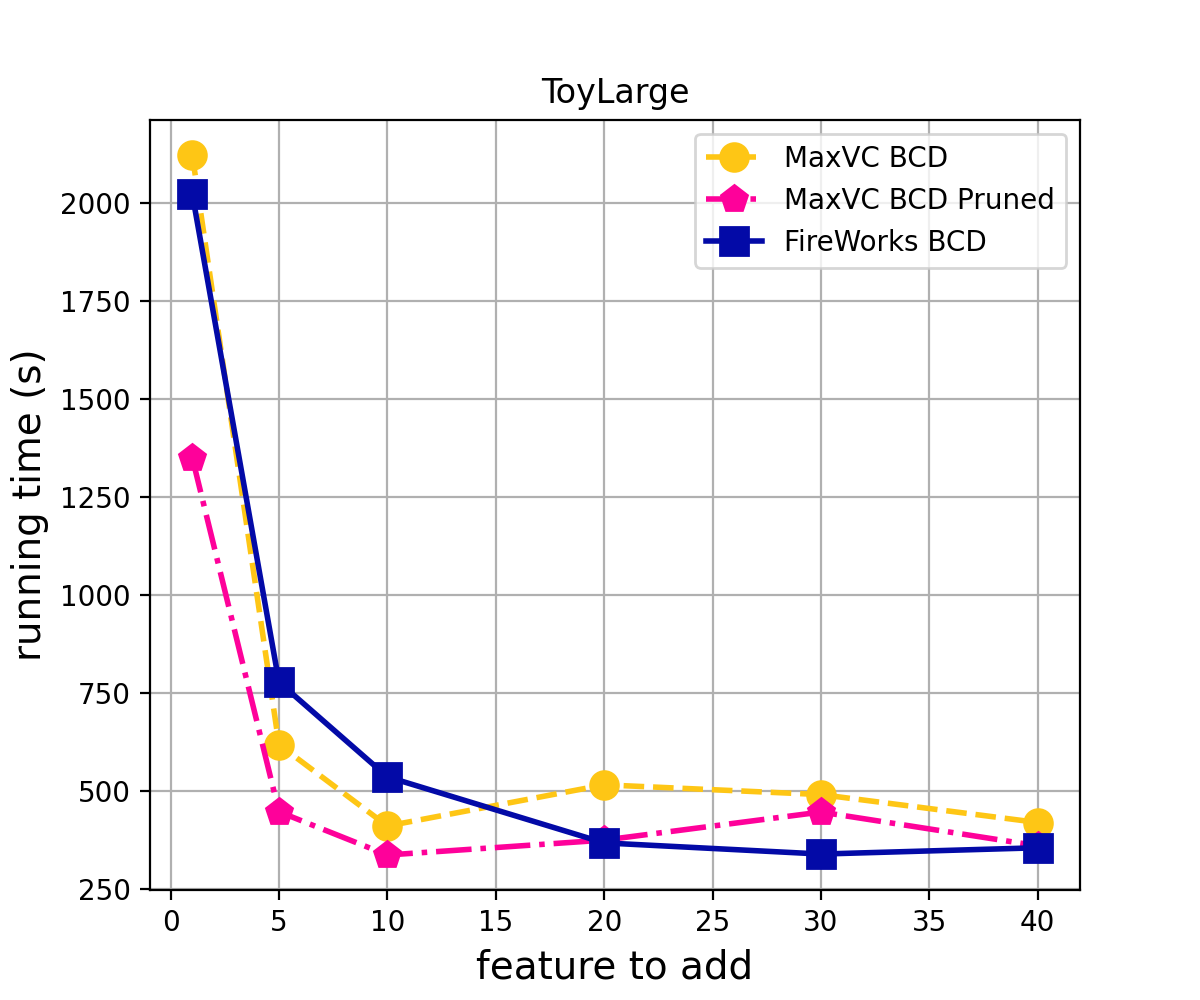} \\
	\includegraphics[width=8cm]{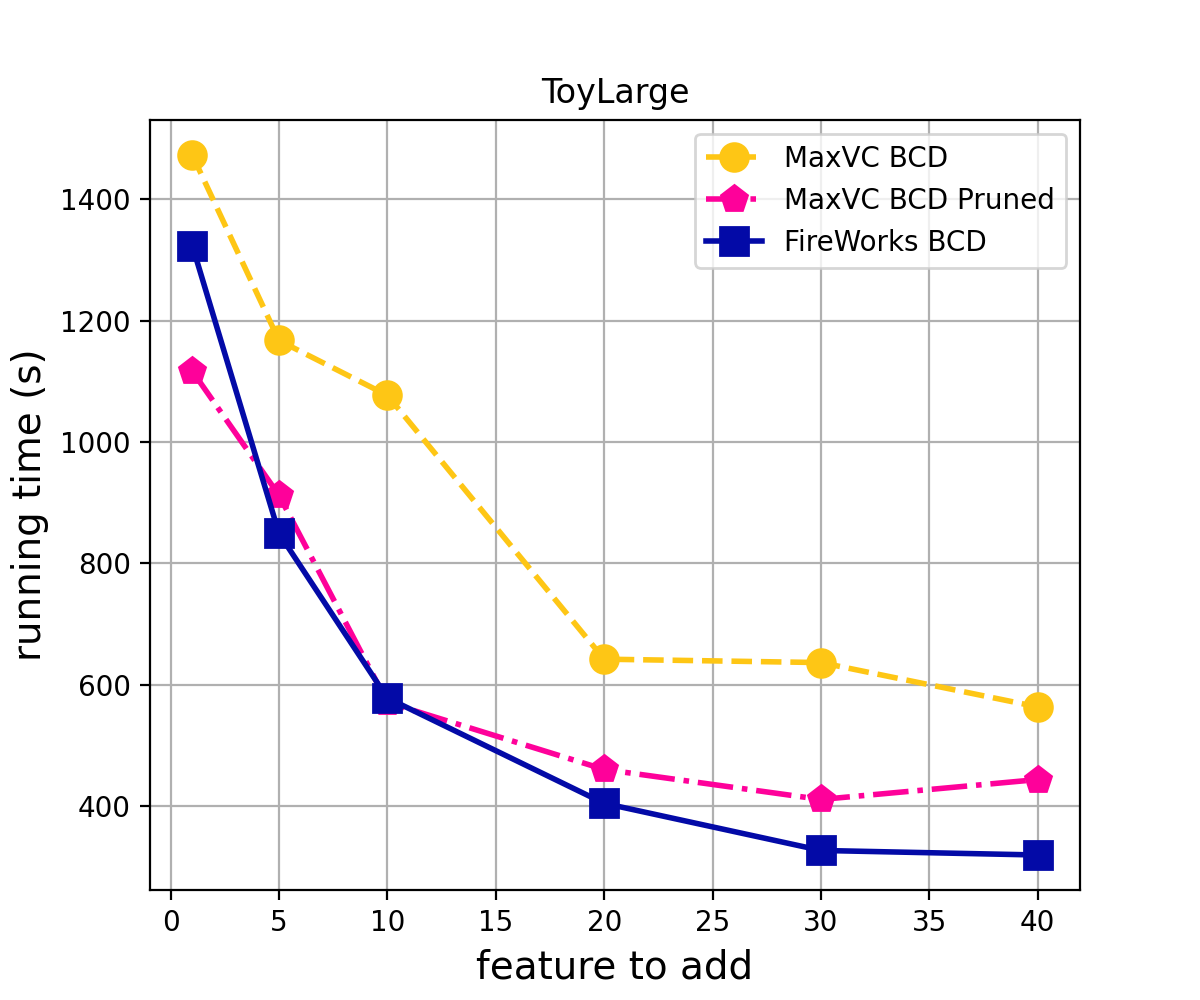}
	\includegraphics[width=8cm]{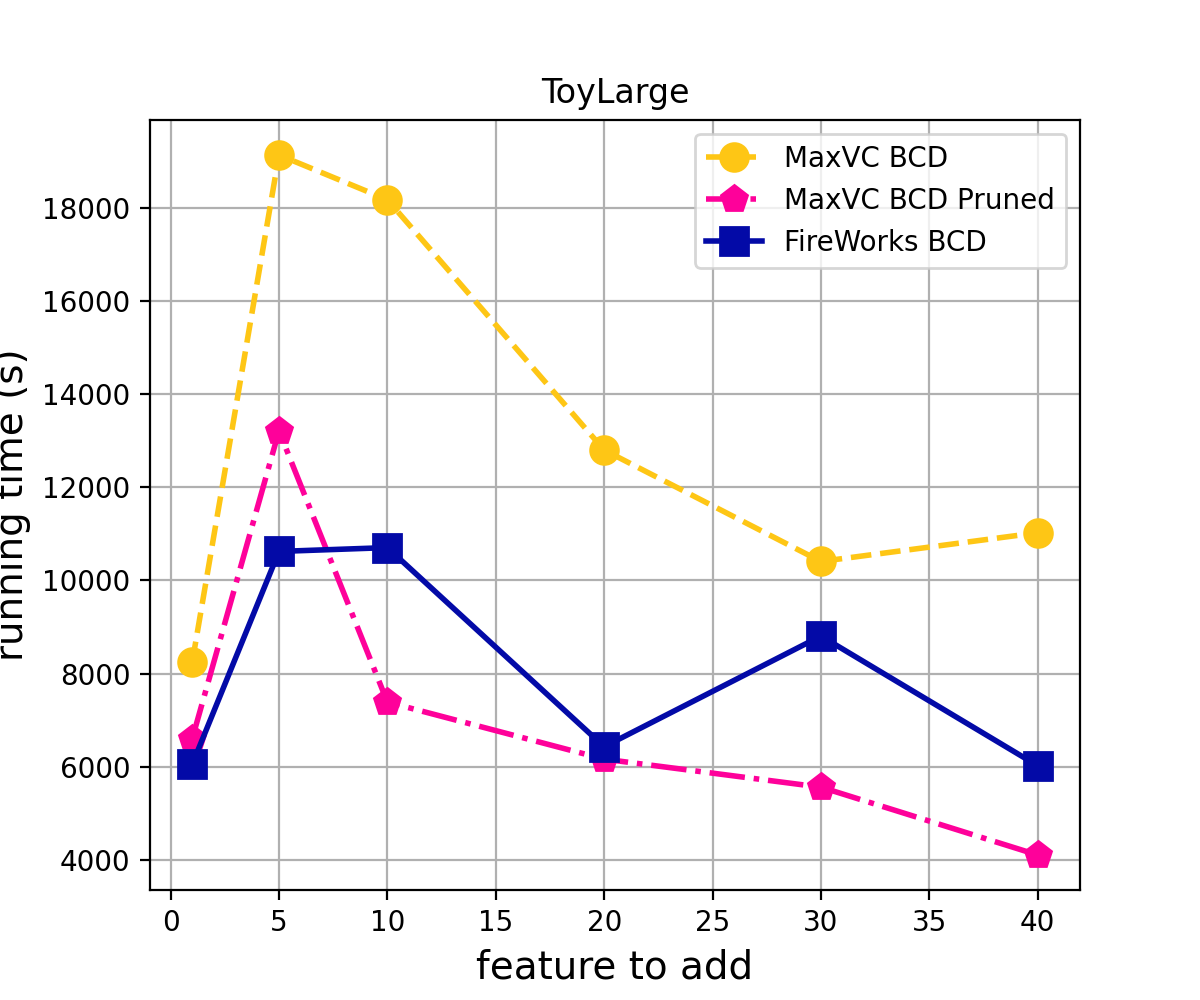}
		\end{center}
\caption{Running time of MaxVC, Max VC with pruning, and our FireWorks on the Large toy problem. The four panels
	varies in the choice of $K$ in the regularization parameter expressed as $\lambda = K \max_j |\x_j^\top \y|$ and in the tolerance $t$ on the stopping criterion .
	 (top-left)   $K=0.07$ and $t=1e^{-3}$ (top-right)~$K=0.07$ and $t=1e^{-5}$.
	 (bottom-left)   $K=0.01$ and $t=1e^{-3}$ (bottom-right)~$K=0.01$ and $t=1e^{-5}$.\label{fig:feature}}.
\end{figure*}

\end{document}